\pgfplotsset{compat=1.18}
\theoremstyle{plain}
\newtheorem{theorem}{Theorem}[section]
\theoremstyle{definition}
\newtheorem{definition}{Definition}
\theoremstyle{remark}
\newtheorem{remark}{Remark}[section]
\newtheorem{example}{Example}[section]
\definecolor{color_delete}{RGB}{255,179,111}
\definecolor{color_replace}{RGB}{121,173,209}
\definecolor{color_insert}{RGB}{199,164,159}
\definecolor{mymint}{RGB}{106,145,135}
\definecolor{mypinkred}{RGB}{194,104,104}
\newcommand{\calV}{{\mathcal{V}}}
\newcommand{\calH}{{\mathcal{H}}}
\newcommand{\calA}{{\mathcal{A}}}
\newcommand{\calG}{{\mathcal{G}}}
\newcommand{\calP}{{\mathcal{P}}}
\newcommand{\bs}{{\mathbf{s}}}
\pgfplotsset{compat=1.18}
\newcommand{%
	\scalebox{}{\input{}}
}[2]{%
	\scalebox{#1}{\input{#2}}
}
\title{Detecting Post-generation Edits to Watermarked LLM 
Outputs via Combinatorial Watermarking}
\author[1]{Liyan~Xie$^{*}$}
\author[3]{Muhammad~Siddeek$^{*}$}
\author[2]{Mohamed~Seif$^{*}$}
\author[2,4]{Andrea~J. Goldsmith}
\author[2]{Mengdi~Wang}
\affil[1]{\small
Department of Industrial and Systems Engineering,
University of Minnesota}
\affil[2]{\small
Department of Electrical and Computer Engineering, Princeton University}
\affil[3]{\small
Google}
\affil[4]{\small
Stony Brook University}
\date{} % leave empty if no date is wanted
\renewcommand\AB@affilsepx{ \protect\\[0.2em]} % tighten spacing if needed
\affil[ ]{\small $^{*}$These authors contributed equally to this work.}
\begin{document}

\maketitle

\begin{abstract}
Watermarking has become a key technique for proprietary language models, enabling the distinction between AI-generated and human-written text. However, in many real-world scenarios, LLM-generated content may undergo post-generation edits, such as human revisions or even spoofing attacks, making it critical to detect and localize such modifications.
In this work, we introduce a new task: detecting post-generation edits locally made to watermarked LLM outputs. To this end, we propose a combinatorial pattern-based watermarking framework, which partitions the vocabulary into disjoint subsets and embeds the watermark by enforcing a deterministic combinatorial pattern over these subsets during generation. We accompany the combinatorial watermark with a global statistic that can be used to detect the watermark. Furthermore, we design lightweight local statistics to flag and localize potential edits. We introduce two task-specific evaluation metrics, Type-I error rate and detection accuracy, and evaluate our method on open-source LLMs across a variety of editing scenarios, demonstrating strong empirical performance in edit localization.
\end{abstract}

% To do list:

% [Theory]: organize and revise theoretical results;

\section{Introduction}\label{sec:intro}

The swift progress of Large Language Models (LLMs) is transforming industries ranging from software engineering and education to customer service \citep{achiam2023gpt,touvron2023llama,guo2025deepseek,team2023gemini,kasneci2023chatgpt,zhang2022opt,anthropic2024claude,cotton2024chatting}. To enable provable identification of AI-produced content, a common practice is to embed watermarks, some hidden and detectable signals, into LLM-generated text \citep{kamaruddin2018review,cayre2005watermarking,huang2023towards}. This is usually achieved by carefully controlling the token distribution during the generation process, ensuring that the watermark remains imperceptible to end-users while preserving the overall text quality, as demonstrated in the recent watermarking frameworks \citep{fernandez2023three,hu2023unbiased,zhao2024permute,aaronson2023watermarking,kuditipudi2023robust,kirchenbauer2023watermark,dathathri2024scalable,zhaoprovable,giboulot2024watermax,chen2025a,christ2024undetectable}.

As watermarking becomes a pivotal mechanism for tracing and attributing generated content, the same marks create an attack surface: adversaries can deliberately manipulate them to misattribute text, deceiving downstream users and harming the reputations of legitimate providers \citep{pang2024no}. %Meanwhile, not all post-generation edits are adversarial in nature. In practice, benign edits--such as paraphrasing, shortening, or polishing--are common and often necessary for adapting model outputs to specific stylistic or contextual requirements. 
As has been shown in \cite{pang2024no}, a robust watermarking scheme that is easier to be detected, is also vulnerable to spoofing attacks. While existing methods focus on \textit{global} watermark detection, they offer little visibility into {\it how} or {\it where} a text may have been modified post-generation, whether by malicious actors or through routine human revision.

\begin{figure}[t]
    \centering    \includegraphics[width=1\linewidth]{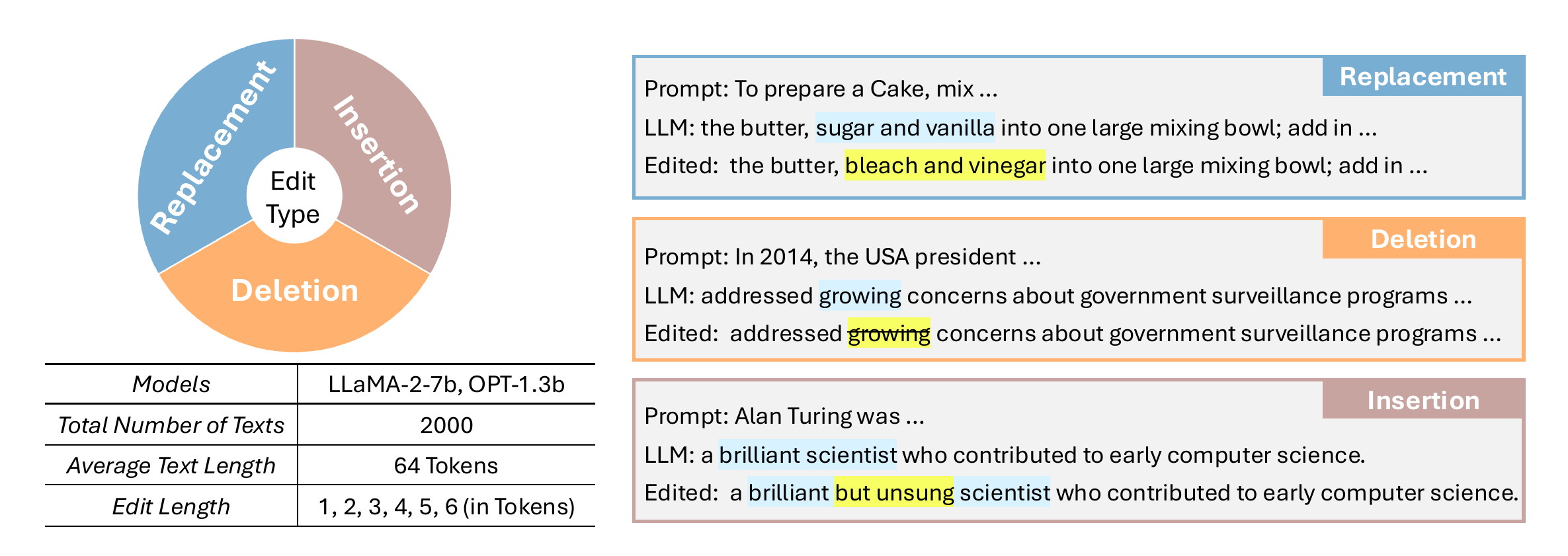}
    \vspace{-0.3in}
    \caption{Overview of the constructed dataset used for evaluation. (Left) Characteristics of the generated texts. Edits are uniformly distributed across three types--replacement, deletion, and insertion--and span lengths from one to six tokens. 
    (Right) Examples of each edit type. For each example, we show the prompt, the watermarked LLM output, and the edited text. Edited spans are highlighted in yellow to illustrate the nature and location of edits.}
    \label{fig:data}
\end{figure}

In this work, we introduce the new task of local post-generation \emph{edit detection}, which aims to identify and localize post-generation edits made to watermarked LLM outputs. This capability is critical in applications that demand accountability and transparency, such as collaborative content creation, academic writing, or high-stakes public communication. To this end, we propose a general combinatorial pattern-based watermarking scheme, along with corresponding edit detection statistics designed to accurately identify modified spans. Meanwhile, we demonstrate that such combinatorial pattern-based watermark remains reliably detectable, comparable to state-of-the-art schemes, ensuring that the origin of LLM-generated content can still be verified.

Our contributions are summarized as follows:
\vspace{-0.02in}
\begin{itemize}[leftmargin=*]\setlength{\itemsep}{0.6pt}
    \item We formally define the task of post-generation edit detection and localization, and propose evaluation metrics, including detection accuracy and false alarm rate, to assess performance.
    \item We introduce a general framework for combinatorial pattern-based watermarking that prioritizes post-generation edit detection  (see Figure \ref{fig:simple-example} for an illustration). The framework consists of: (i) a watermark generation mechanism based on predefined combinatorial patterns; (ii) a global statistic for watermark detection; and (iii) specialized statistics for localizing post edits.
    \item We evaluate the effectiveness of our edit detection method on a simulated dataset, including both watermarked texts and their edited versions under a range of post-generation editing scenarios (see Figure \ref{fig:data} for examples of the editing scenarios).
    % Inspired by the works of \cite{kirchenbauer2023reliability, kashtan2023information, li2024robust, pang2024no, christ2024undetectable}, 
    % \item To ensure practical usability, we explore various combinatorial pattern designs and explore the trade-off between fidelity and watermark detectability, demonstrating that the combinatorial watermarking can be detected effectively in practice.
\end{itemize} 
The remainder of the paper is organized as follows. Section \ref{sec:pre} introduces preliminary knowledge and defines the task of post-generation edit detection. Section \ref{sec:watermark} introduces our combinatorial pattern-based watermarking framework, including watermark generation, watermark detection, and edit detection statistics. Section \ref{sec:numerical} presents numerical experiments evaluating both watermark detectability and edit detection performance. Section \ref{sec:conclusion} concludes the paper with key insights and points to future directions toward advancing accountability and transparency in LLM-generated content through edit detection and watermarking.

\subsection{Related Work}

\noindent \textbf{Watermarking Methods.} Our work builds on and is thus mostly close to the provably robust watermarking scheme \citep{kirchenbauer2023watermark}, which perturbs the model’s logit vector in a green list. Common choices of the green list include the KGW scheme \citep{kirchenbauer2023watermark} and the Unigram scheme \citep{zhaoprovable}. 
Our work is also related to \cite{chen2025a}, which proposes a similar pattern-based watermarking but for order-agnostic LLMs. We mainly differ from \cite{chen2025a} in two key aspects: (i) while our simplest combinatorial pattern can be viewed as a special case of their Markov chain-based pattern mark, we adapt it for the task of edit detection; and (ii) our general pattern adopts deterministic transitions, unlike the probabilistic structure used in \cite{chen2025a}, and allows duplicate tags,  enabling efficient localization of edits.

\noindent \textbf{Post-edit Detection.} A persistent gap in the literature (see a survey in \cite{crothers2023machine}) is that post-generation edits typically surface as brief, scattered changes at unpredictable positions in the text. Most existing detectors are calibrated to flag \textit{long} content, such as AI-generated content detection \citep{bao2023fast,chakraborty2023possibilities,gehrmann2019gltr,li2024robust,mitchell2023detectgpt,sadasivan2023can}, making sentence or phrase level tweaks both difficult to catch and even harder to localize. The watermark \emph{agnostic} approach of \cite{kashtan2023information} seeks finer granularity by applying the Higher Criticism (HC) metric to detect sparse anomalies \emph{without} leveraging any embedded watermark signal. While HC offers asymptotic optimality guarantees, its power may converge slowly in practice, limiting its effectiveness on short or moderately sized passages. Additionally, it yields a purely global test statistic that indicates whether edits occurred but provides no cue about where they lie. 
More recently, \cite{leipald} introduced a Bayesian detection framework that estimates the proportion of LLM-generated content and flags the corresponding segments, using the $T$-score statistic \citep{cohen2022bayesian}. While their objective is related to ours, they do not consider post-generation edits made to LLM output. Methodologically, their approach is also fundamentally different---they operate on fixed segmentations and do not leverage embedded watermarks. In contrast, we focus on detecting token-level edits made to watermarked LLM output. To this end, we propose a watermarking scheme that \textit{simultaneously} supports both watermark verification and precise localization of post-generation edits.

%A persistent gap in the literature (see a comprehensive survey in \cite{crothers2023machine}) is that post‑generation edits whether innocuous revisions or adversarial manipulations—tend to surface as brief, scattered changes at unpredictable positions in the text. Most existing detectors are tuned to spot \textit{longer}, contiguous stretches of altered content, making sentence‑ or phrase‑level edits hard to flag and harder still to localize with confidence. Although the recent work of~\cite{kashtan2023information} begins to address this finer granularity, it remains watermark \textit{agnostic}; by ignoring the embedded signal, it sacrifices the extra leverage watermarking can provide for pinpointing such subtle edits.

\noindent\textbf{Balancing Watermark Integrity and Post-edit Traceability.}  Existing research has mainly analyzed the tradeoff between watermark \emph{detectability} and \emph{robustness} to removal or spoofing attacks~\citep{kirchenbauer2023watermark,zhaoprovable,li2024semantic,moitra2024edit,chen2024topic,pang2024no}. To the best of our knowledge, no existing method addresses the challenge of determining whether a watermarked LLM output has been post-edited and where those edits occur. We take a step in this direction by proposing a unified framework for both watermark integrity verification and edit detection.

\vspace{-0.1in}
\section{Preliminaries and Problem Setup}\label{sec:pre}

%In this section, we introduce the notation used throughout the paper and review related work.

\subsection{Notation and Basics}

We use $\calV$ to denote the vocabulary set---the set of all tokens an LLM can generate in a single time step. We refer to tokens $s^{(-N_p)},\ldots,s^{(0)}$ as the prompt, and $s^{(1)},\ldots, s^{(T)}$ as the generated response. For brevity, we denote any subsequence $s^{(i)}, \ldots, s^{(j)}$ by $s^{(i:j)}$. In this work, we consider an autoregressive LLM \citep{radford2019language}. Specifically, at each time step $t$, the model generates the next token according to a learned distribution over $\calV$, conditioning on all preceding context. We denote this distribution as $P$ and it can be parametrized by a logit vector $\bar{l}^{(t)} =  (l_{1}^{(t)}, \cdots, l_{|\mathcal{V}|}^{(t)})$, which is computed based on the preceding tokens. The resulting token distribution $p^{(t)}$ is then given by: $p_u^{(t)} \triangleq {P}(s^{(t)} = u | s^{(-N_p:t-1)})  = e^{l_{u}^{(t)}}/ (\sum_{v \in \mathcal{V}} e^{l_{v}^{(t)}})$, $\forall u \in \calV$.
% \begin{align}
%     p_u^{(t)} \triangleq {P}_{\textsf{M}}(s^{(t)} = u | s^{(-N_p:t-1)}) & = \frac{\exp(l_{u}^{(t)})  }{\sum_{v \in \mathcal{V}} \exp(l_{v}^{(t)})}, \quad \forall u \in \calV.
% \end{align}

%In practice, it is possible for the user to have access to log probabilities from the model API, e.g., the top-K log probabilities are generally available for frontier SOTA models like GPT4, Gemini, and Claude.

%\subsection{Watermarking Basics}

We follow the line of work in \cite{kirchenbauer2023watermark,zhaoprovable} to pseudorandomly select a subset of the vocabulary set $\mathcal{V}$ and then perturb the logits therein. This subset is usually referred to as the {\it green} list while its complement is usually called the {\it red} list. More specifically, we may denote $\calG^{(t)}=\calH(s^{(-N_p:t-1)},k)$ as the green list at time $t$, where $\calH$ is a (deterministic) hash function, and $k$ is a watermarking secret key. Both the secret key and the function $\calH$ are known to the verifier in order to authenticate the watermark. 
The watermarking is embedded into the generated text by {\it increasing} the logits in the green list while freezing the logits elsewhere. The modified token distribution $\tilde{p}^{(t)}$ is thus given as 
\begin{align}\label{eq:watermarked}
    \tilde{p}_{u}^{(t)} \triangleq \tilde{P}(s^{(t)} = u | s^{(-N_p:t-1)}; k) 
    & = \frac{\operatorname{exp}(l_{u}^{(t)} + \mathds{1}(u \in \mathcal{G}^{(t)}) \cdot \delta) }{\sum_{v \in \mathcal{G}^{(t)}} \operatorname{exp}(l_{v}^{(t)} + \delta) +\sum_{v \notin \mathcal{G}^{(t)}} \operatorname{exp}(l_{v}^{(t)}) }, \ \forall u \in \calV,
\end{align}
where $\delta \geq 0$ is a perturbation parameter reflecting watermarking strength, $l^{(t)}$ denotes the original logit vector at time step $t$, and $\mathds{1}(\cdot)$ is the indicator function.

\begin{figure}[t!]
    \centering
    \includegraphics[width=1\linewidth]{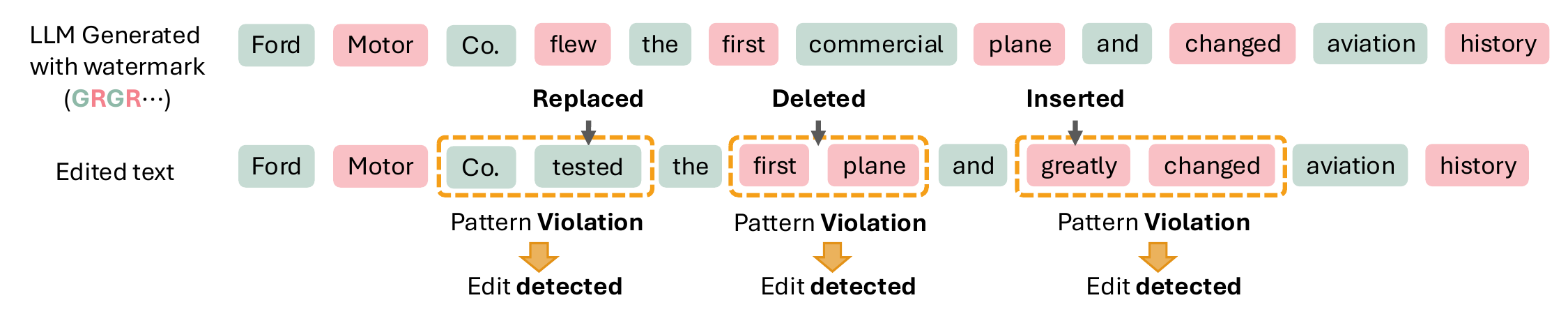}
    \vspace{-0.2in}
    \caption{A proof-of-concept illustration of combinatorial pattern-based watermarking for edit detection. Suppose a simple {\textcolor{mymint}{Green}}-{\textcolor{mypinkred}{Red}} alternating watermark pattern is embedded. We slide a window (of size two in this example) and check whether tokens within each window align with the expected pattern. A significant pattern violation indicates a potential post-generation edit.}
    \label{fig:simple-example}
\end{figure}

% \subsection{A simple proof of concept example}
% We start by describing a simple proof-of-concept example to illustrate the idea of pattern-based watermarking, and its advantage in improving \textbf{edit detection} efficiency. The key insight is that by imposing structured constraints on token selection, we can make post-generation edits more easily detectable. For example, as shown in Figure~\ref{fig:simple-example}, an alternating Green-Red list scheme guarantees that any single token insertion (or deletion) will violate the pattern and can therefore be reliably detected.

\subsection{Problem Setup: Post-generation Edit Detection}\label{sec:detection-task}

Given a text as a list of tokens, we consider the possibility that the text undergoes post-generation modifications. In this work, post-generation edits refer to any modifications that do {\it not adhere to the watermarking rule}---for example, human edits or edits made without knowledge of the underlying watermarking mechanism. Let $\mathbf{s}=s^{(1:T)}$ denote the watermarked text of length~$T$ generated by the watermarked model, i.e., $\mathbf{s}\sim \tilde P(\,\cdot|s^{(-N_p:0)})$, we denote $\tilde{\bs}$ as the edited version of $\bs$. We focus on three primary types of local edits: token {\it replacement}, token {\it insertion}, and token {\it deletion}. Each edit is restricted to a contiguous span of at most $S$ tokens, where the hyperparameter $S$ sets the maximum span of each local edit and reflects our assumption that edits are localized and moderate. 
These edit types are both commonly encountered in practice and analytically tractable \citep{kirchenbauer2023reliability, pang2024no, zhaoprovable, an2025defending}. Multiple such edits may occur in non-overlapping regions of the sequence, allowing for general modifications while preserving the local nature of each edit; see Figure \ref{fig:simple-example} for an example. 
This setting also captures realistic human editing behaviors such as paraphrasing or minor content adjustments.

In the {\it edit detection} task, given a text $\bs$ and a pre-specified watermarking scheme, the goal is to detect: (1) whether the text $\bs$ has undergone any post-generation edits; and (2) the {\it location} of such edits, if present. This can be formalized via an algorithm $\calA$ that takes text ${\bs}$ as input and outputs a set of suspected \textit{local} edit indexes $\calA({\bs}) = \{I_1, I_2, \dots, I_a\}$, $I_j \in [T]$. If $\calA({\bs})  = \emptyset$, this indicates that no post-generation edit has been detected.
We evaluate the edit detection performance of an algorithm $\calA$ via the following two metrics: detection accuracy and Type-I error, each assessed under a tolerance parameter $L$ that can be flexibly chosen as needed.

\begin{definition}[Detection accuracy] A true edit within text $\bs$ at position $t$, i.e., $s^{(t)}$, is considered correctly detected if there exists $I_j \in \calA(\bs)$ such that $|I_j - t| \leq L$, otherwise, it is counted as a Type-II error (i.e., a miss detection). The detection accuracy is defined as the proportion of true edits that are successfully detected.
\end{definition}

\begin{definition}[Type-I error rate] For a given text $\bs$, if a position $t$ lies at least \(L + 1\) tokens away from any true edit, and the algorithm flags any position within the interval \([t - L, t + L]\), then it is considered as a Type-I error (i.e., a false alarm). The Type-I error rate is defined as the proportion of such positions that are incorrectly flagged.
\end{definition}

\begin{wrapfigure}{r}{0.3\linewidth}
  \centering
  \vspace{-0.25in}  \includegraphics[width=0.85\linewidth]{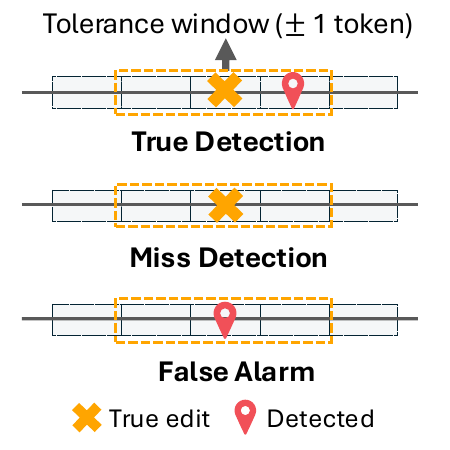}
  \vspace{-0.15in}
  \caption{Illustration of edit detection outcomes.}
  \label{fig:evaluation}
  \vspace{-0.4in}
\end{wrapfigure}
The scenarios of miss detection and false alarms are illustrated in Figure \ref{fig:evaluation} with a small tolerance window of $L=1$. It is worthwhile mentioning that these metrics extend classical Type-I error rate and power in hypothesis testing to a {\it local} detection setting. The tolerance parameter $L$ allows for small positional discrepancies, which is introduced to enable a more robust evaluation of detection accuracy, when exact alignment between detected and true edit positions is not strictly required. Note that setting $L=0$ enforces exact matching between detected and true edit locations, but may make the evaluation overly sensitive to minor misalignments, especially in ambiguous or noisy contexts.

\section{Combinatorial Pattern-based Watermarking for Edit Detection}\label{sec:watermark}

\subsection{Watermark Generation based on Combinatorial Patterns}\label{sec:rule}

We now introduce the generalized \textit{combinatorial} pattern-based watermarking rule that promotes the use of certain sub-vocabularies according to a deterministic, pre-defined pattern $\calP$. Formally, assume we have $r$ unique {\it tags} $\{T^{(1)},T^{(2)},\ldots,T^{(r)}\}$, each associated with a set $\calV_{T^{(j)}}\subset \calV$, for $j=1,2,\ldots,r$, and $\{\calV_{T^{(1)}},\ldots,\mathcal{V}_{T^{(r)}}\}$ forms a partition of $\calV$.

The watermarking rule depends on a combinatorial pattern $\calP:= \{T_1,T_2,\ldots, T_R\}$, where each $T_i \in \{T^{(1)},T^{(2)},\ldots,T^{(r)}\}$, and $R$ denotes the pattern period. The pattern $\calP$ may contain repeated tags and is intended to be repeated cyclically to span the full token sequence during generation. In the following, we present two concrete examples, both of which we use in our numerical experiments.

\vspace{0.01in}
\begin{example}[Alternating Binary Pattern]\label{pattern1}\label{pattern1}
    With two unique tags (e.g., $A$ and $B$), we define the pattern $\calP=\{A,B\}$, and thus the watermark is governed by the order $A,B,A,B,\ldots$. Here $A$ and $B$ can be interpreted as the green and red lists (see Figure \ref{fig:simple-example}), respectively, aligning with standard terminology in prior work.
\end{example}

\vspace{0.01in}
\begin{example}[Alternating Quaternary Pattern]\label{pattern2}
    With four unique tags (e.g., $A, B, C, D$), we define the combinatorial pattern $\calP=\{A, C, A, D, B, C, B, D\}$, and the watermark is governed by the order $A,C,A,D,B,C,B,D,A,C,A,D,B,C,B,D,\ldots$.
\end{example}

At each token position $t$, the watermark generation process promotes the selection of tokens from the subset $\calV_{T_{(t \bmod R) + 1}}$, corresponding to the tag $T_{(t \bmod R) + 1}$, as specified by the pattern. For a given watermarking key $k$, the vocabulary is partitioned into $r$ subsets, and the $t$-th token is then generated according to the perturbed distribution (see Algorithm~\ref{alg:watermark} for the full procedure):
\begin{align}\label{eq:pattern-rule}
    \tilde{p}_{u}^{(t)}  \triangleq \tilde{P}(s^{(t)} = u | s^{(-N_p:t-1)} ) & = \frac{\operatorname{exp}(l_{u}^{(t)} + \mathds{1}(u \in \calV_{T_{(t \bmod R) + 1}}) \cdot \delta) }{\sum\limits_{v \notin \calV_{T_{(t \bmod R) + 1}}} \operatorname{exp}(l_{v}^{(t)})  + \sum\limits_{v \in \calV_{T_{(t \bmod R) + 1}}} \operatorname{exp}(l_{v}^{(t)} + \delta)}.
\end{align}
In other words, we perturb the logits according to the pattern. We note that when $\delta$ is large enough, the watermarking mechanism above will restrict generation to the target subset at each step.

\subsection{Watermark Detection}

We first give the statistics that can be used to detect the combinatorial pattern-based watermark, since any watermarking mechanism must be accompanied by a corresponding detection procedure. The idea is similar to \cite{kirchenbauer2023watermark} by counting the proportion of tokens that align with the pre-specified pattern. Given the text \( \bs \), the objective is to determine whether the text is human-generated or produced by an LLM. This task can be framed as a hypothesis testing problem with the null hypothesis: \( \mathcal{H}_{0} \): ``the text is generated with no knowledge of the watermarking rule''.

We slide a window of size $w \in \mathbb{N}$ over the token sequence and inspect whether the $w$ consecutive tokens belong to a \textit{cyclically} ordered sub-sequence of the pattern. For simplicity, we consider window size $w$ no larger than the pattern length $R$. The approach extends naturally to larger $w$; See Appendix~\ref{app:example} for concrete examples. Specifically, the subsequence \( s^{(t:t + w - 1)}\) is considered a match if there exists a cyclic permutation \( (\calV_{T_{\pi(1)}}, \ldots, \calV_{T_{\pi(R)}}) \) of \( (\calV_{T_1}, \ldots, \calV_{T_R}) \) such that:
\begin{equation}\label{eq:count}
\exists v \in [R]: \ s^{(t)} \in \calV_{T_{\pi(v)}},\ s^{(t+1)} \in \calV_{T_{\pi(v+1)}},\ \ldots,\ s^{(t + w - 1)} \in \calV_{T_{\pi(v+w-1)}}. 
\end{equation}
We denote
\[
I_w(t) =  \mathds{1} \left\{ \exists \text{ cyclic shift $\pi$ such that \eqref{eq:count} is satisfied} \right\},
\]
which is a binary indicator on whether the subsequence $s^{(t:t + w - 1)}$ aligns with the watermark pattern.

\noindent \textbf{Watermark Detection Statistic.} Given the pre-specified pattern $\calP$ and the window size $w$, we define the detection statistic $|\bs|_D$ as the normalized count of matching subsequences:
\begin{equation}\label{eq:water-detection}
|\bs|_D = \frac{1}{T - w + 1} \sum_{t = 1}^{T - w + 1} I_w(t).    
\end{equation}
The value of \( |\bs|_D \) is then compared to a predefined threshold \( \tau_d \) (chosen by controlling false alarm rate); when \( |\bs|_D \geq \tau_d \), we reject \( \mathcal{H}_0 \) and conclude the text is likely watermarked (see Algorithm \ref{alg:watermark-detection}).

\begin{algorithm}[t]
\caption{Pattern-based Watermarking}\label{alg:watermark}
\begin{algorithmic}[1]
\Statex \hspace*{-\algorithmicindent} \textbf{Input:}  Base LLM $P_{\textsf{M}}$, a pre-specified pattern $\calP$, the partition $\{\calV_{T^{(1)}},\ldots,\mathcal{V}_{T^{(r)}}\}$, and $\delta>0$.
\Statex \hspace*{-\algorithmicindent} \textbf{Output:}  Generated text $s^{(1:T)}$.
\State Initialize $t \gets 1$, prompt $s^{(-N_p:0)}$.
\While{$t \le T$}
    \State Get current tag $T_{(t \bmod R) + 1}$ from pattern at step $t$.
    \State Compute base logits $l_{u}^{(t)}$, $u\in\calV$.
    \State Apply logit shift for $u \in \mathcal{V}_{T_{(t \bmod R) + 1}}$ and sample $s^{(t)} \sim \tilde{p}^{(t)}$ according to \eqref{eq:pattern-rule}.
    \State $t \gets t + 1$.
\EndWhile
\State \Return $\{s^{(1:T)}\}$.
\end{algorithmic}
\end{algorithm}
\vspace{-0.1in}

\begin{algorithm}[ht!]
\caption{Pattern-based Watermark Detection}\label{alg:watermark-detection}
\begin{algorithmic}[1]
\Statex \hspace*{-\algorithmicindent} \textbf{Input:}  Text $s^{(1:T)}$, pattern $\calP$, detection threshold $\tau_d$.
\Statex \hspace*{-\algorithmicindent} \textbf{Output:}  Decision (watermarked or not).
\State Compute detection statistics $|\bs|_D$ in \eqref{eq:water-detection}.
\If{$|\bs|_D \ge \tau_d$}
    \State \Return Watermarked.
\Else
    \State \Return Not watermarked.
\EndIf
\end{algorithmic}
\end{algorithm}

\begin{algorithm}[ht!]
\caption{Edit Detection for Pattern-based Watermarking}\label{alg:edit-detection}
\begin{algorithmic}[1]
\Statex \hspace*{-\algorithmicindent} \textbf{Input:} Text $s^{(1:T)}$, watermarking pattern, detection threshold $\tau_e$.
\Statex \hspace*{-\algorithmicindent} \textbf{Output:} Decision (edited or not) and the potential edit region.

\State Compute token-specific detection statistics $|\bs|_E(t)$ as in \eqref{eq:edit-stat} for all $t$.
\If{$\min_{t = w, \ldots,t-w+1} |\bs|_E(t) <  \tau_e$}
    \State \Return Edit detected; and return the indexes set $I=\{t: |\bs|_E(t) < \tau_e \}$.
\Else
    \State \Return Not edited.
\EndIf
\end{algorithmic}
\end{algorithm}

\subsection{Post-Generation Edit Detection}\label{sec:edit-detect}

We then present our lightweight edit detection statistics designed to identify local positions that violate the pre-specified pattern; see a proof-of-concept illustration in Figure \ref{fig:simple-example}. We will again use the binary indicator $I_w(t)$ as the crucial element in constructing the edit detection statistics. 
We define the local edit statistic at each token index $t$ as, again, for a window of size $w$:
\begin{equation}\label{eq:edit-stat}
|\bs|_E(t) = \frac{1}{w} \sum_{i=0}^{w-1} I_w(t-i).  
\end{equation}
Intuitively, the above average computes the average alignment of these $w$ windows, which all contain the current token $s^{(t)}$, with the pattern $\calP$. We then compare each local statistic $|\bs|_E(t)$ with a pre-specified threshold (calibrated to control the false alarm rate), and output all regions with statistics below the threshold. See Algorithm~\ref{alg:edit-detection} for a complete summary of the procedure. Detailed computational complexity analysis is provided in Appendix \ref{app:complexity}.

We present the following guarantee on the false alarm rate of edit detection under certain assumptions. The proof can be found in Appendix \ref{app:theory}.
\begin{theorem}[Type-I error rate of edit detection]\label{thm:false-alarm} Assume that under a clean watermark, the pattern alignment probability for each window of size $w$ is \(\mu_1^{(w)} := \mathbb{P}[I_w(t) =1], \forall t\). When $\mu_1^{(w)}=1$ (hard watermarking with strict pattern adherence), we have the Type-I error rate (probability of a false alarm) $\Pr[|\bs|_E(t)< \tau_e \mid \operatorname{no \  edit}]=0$ for any $\tau_e<1$. When $\mu_1^{(w)}<1$ (soft watermarking), we have for any detection threshold $\tau_e< \mu_1^{(w)}$, the Type-I error rate at token $t$ under a clean (unedited) watermark is bounded by
\[
\Pr[|\bs|_E(t)< \tau_e \mid \operatorname{no \  edit}] \le \exp\left(-\frac{w^2( \mu_1^{(w)} - \tau_e)^2}{2\Delta^{(w)}}\right),
\] 
where $\Delta^{(w)} := \sum_{i,j}\mathbb{E}[I_w(t-i) I_w(t-j)]$ is a constant that depends on $w$.

\end{theorem}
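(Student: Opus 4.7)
The plan is to split along the two regimes for $\mu_1^{(w)}$. The hard case is immediate: if $\mu_1^{(w)}=\Pr[I_w(t)=1]=1$ for every $t$, then each indicator $I_w(t-i)$ in the window equals $1$ almost surely, so $|\bs|_E(t)=\frac{1}{w}\sum_{i=0}^{w-1}I_w(t-i)=1$ almost surely, and no threshold $\tau_e<1$ can be violated under a clean watermark.

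For the soft case, I would set $S_t := \sum_{i=0}^{w-1} I_w(t-i) = w|\bs|_E(t)$, so that $\mathbb{E}[S_t] = w\mu_1^{(w)}$ and, by the stated definition, $\mathbb{E}[S_t^2] = \sum_{i,j}\mathbb{E}[I_w(t-i)I_w(t-j)] = \Delta^{(w)}$. Because $\tau_e < \mu_1^{(w)}$, the target event rewrites as a genuine lower-tail deviation
\[
\{|\bs|_E(t) < \tau_e\} \;=\; \{S_t < w\mu_1^{(w)} - w(\mu_1^{(w)}-\tau_e)\}.
\]
The plan is then a Chernoff-style exponential Markov step: for any $\lambda>0$,
\[
\Pr[S_t < w\tau_e] \;\le\; e^{\lambda w\tau_e}\, \mathbb{E}[e^{-\lambda S_t}].
\]
To bound the MGF, I would invoke the elementary inequality $e^{-y} \le 1 - y + y^2/2$, which holds for all $y \ge 0$ (verified by noting that $g(y) = 1-y+y^2/2 - e^{-y}$ satisfies $g(0)=g'(0)=0$ and $g''(y)=1-e^{-y}\ge 0$). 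Applying this to $y = \lambda S_t \ge 0$ and taking expectations gives
\[
\mathbb{E}[e^{-\lambda S_t}] \;\le\; 1 - \lambda w\mu_1^{(w)} + \frac{\lambda^2 \Delta^{(w)}}{2} \;\le\; \exp\!\Big(-\lambda w\mu_1^{(w)} + \tfrac{\lambda^2 \Delta^{(w)}}{2}\Big),
\]
where the last step uses $1+x\le e^x$. Combining and optimizing over $\lambda>0$ at $\lambda^\star = w(\mu_1^{(w)}-\tau_e)/\Delta^{(w)}$ yields the claimed $\exp\!\big(-w^2(\mu_1^{(w)}-\tau_e)^2/(2\Delta^{(w)})\big)$ bound.

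The main obstacle is the dependence among the overlapping window indicators $I_w(t-i)$: they share tokens, so classical independence-based concentration (Hoeffding, Bernstein) does not apply directly. The key move that sidesteps this difficulty is to avoid independence altogether and exploit non-negativity of $S_t$ via the second-order inequality for $e^{-y}$; this lets the raw second moment $\Delta^{(w)}$ enter the bound without any covariance decoupling. The only subtle sanity check is that the optimizer $\lambda^\star$ is indeed positive, which is guaranteed by the hypothesis $\tau_e < \mu_1^{(w)}$. Tighter constants would require a genuine dependency-graph argument (e.g., Janson-type inequalities exploiting that $I_w(t-i)$ and $I_w(t-j)$ are independent once $|i-j|\ge w$), but those refinements are not needed to obtain the stated result.
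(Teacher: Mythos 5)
Your proof is correct, and it takes a genuinely different route from the paper's. The paper invokes Janson's inequality \citep{janson1990poisson} as a black box and, along the way, decomposes $\Delta^{(w)}$ as $w\mu_1^{(w)}+2\sum_{k=1}^{w-1}(w-k)\mu_1^{(w+k)}$ by observing that two overlapping windows of offset $k$ jointly span $w+k$ tokens. You instead derive the bound from scratch: a Chernoff step $\Pr[S_t<w\tau_e]\le e^{\lambda w\tau_e}\,\mathbb{E}[e^{-\lambda S_t}]$, followed by the pointwise second-order inequality $e^{-y}\le 1-y+y^2/2$ on $y=\lambda S_t\ge 0$, which converts the MGF bound directly into the raw second moment $\mathbb{E}[S_t^2]=\Delta^{(w)}$. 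The two routes land on the identical exponent because with all $w$ windows overlapping, Janson's $\Delta$ coincides with the full second moment, which is exactly what your elementary bound produces. What your approach buys: it is self-contained and avoids the structural hypotheses underlying Janson's inequality (indicators built as products of independent Bernoullis), hypotheses that are strained here since consecutive LLM tokens are not independent; it also never needs the auxiliary quantities $\mu_1^{(w+k)}$, working only with $\Delta^{(w)}$ as stated in the theorem. What the paper's route buys is brevity and a citation to a named inequality, plus the explicit decomposition of $\Delta^{(w)}$ that is useful for the surrounding discussion of how $\Delta^{(w)}$ can scale in $w$. Both proofs handle the hard-watermark case identically.
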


It can be seen that the false alarm probability generally remains small when the detection threshold is relatively low compared to the pattern alignment probability $\mu_1^{(w)}$. Though the exact value of $\mu_1^{(w)}$ is generally intractable, its lower bound typically depends on the entropy of the LLM’s next-token distribution and increases with the watermarking strength parameter $\delta$ \citep{kirchenbauer2023watermark}. It is also worthwhile noting that the constant $\Delta^{(w)}$ here reflects the complex dependencies among sliding windows, which are difficult to characterize explicitly in large language models. Moreover, establishing detection accuracy under edits is more challenging, as the edit process itself is not well modeled by simple statistical assumptions. Further analysis is provided in Appendix \ref{app:theory}.

\section{Numerical Experiments}\label{sec:numerical}
\vspace{-0.05in}

\noindent \textbf{Experimental Setup.} 
We simulate texts using two large language models: LLaMA-2-7B and OPT-1.3b, both accessed via Hugging Face Transformers with deterministic decoding with 4-beam search. In all experiments, prompts are a sample of WikiText texts \citep{merity2016pointer}. The generated texts are all embedded with the combinatorial pattern-based watermarks with varying watermarking strength $\delta$. The edited texts are then generated by specifying three types of edits: token replacement, insertion, and deletion, and the length of each consecutive edit, ranging from 1 token to 6 tokens long. Random edits are injected in randomly selected contiguous spans.
See Figure \ref{fig:data} for an overview of the simulated data we used in our numerical experiments. 

\noindent \textbf{Evaluation.} We conduct two sets of evaluations. First, we evaluate the edit detection performance. For each edited text, we compute token-level edit detection statistics and compare them against a pre-selected threshold. The thresholds are calibrated to control the Type-I error rate (i.e., false alarm rate) at 0.1 across all experiments. We set the window size as $w=8$ for the longer pattern in Example \ref{pattern2} and $w=2$ for all other cases including the baselines. We report both illustrative examples of the edit detection statistics (in Figure \ref{fig:edits-sample1}) and the average detection accuracy across different edit types and lengths (in Figure \ref{fig:edits-sample2}). Second, we evaluate watermark detectability to ensure that the pattern-based watermark remains identifiable. We also illustrate the fundamental trade-off between detection effectiveness and the perplexity (i.e., text quality) of the watermarked outputs.

\noindent \textbf{Runtime Performance.} All experiments were conducted on an RTX 6000Ada GPU with 48GB of VRAM. The detection process is relatively efficient, taking less than one second to perform both watermark and edit detection on a batch of 64-token texts generated from 32 prompts. Watermarked text generation takes approximately seven seconds per batch under the same settings. Note that the generation time is only incurred during dataset construction for evaluation purposes.

\begin{figure}[ht!]
\newcommand{\fadetitle}[1]{
  \textcolor{black!30}{\scriptsize\texttt{{{#1}}}}
}
% SAMPLE #0: REPLACE
\begin{subfigure}{.48\linewidth}
\begin{subfigure}{\textwidth}
\fadetitle{Prompt}\\
 Food and cuisine in Ireland takes its\ldots
\newline
\fadetitle{LLM}\\
\textcolor{black}{\sethlcolor{green!10}\hl{ insp}}\textcolor{black}{\sethlcolor{red!10}\hl{iration}}\textcolor{black}{\sethlcolor{green!10}\hl{ from}}\textcolor{black}{\sethlcolor{red!10}\hl{ centuries}}\textcolor{black}{\sethlcolor{green!10}\hl{ of}}\textcolor{black}{\sethlcolor{red!10}\hl{ G}}\textcolor{black}{\sethlcolor{green!10}\hl{ael}}\textcolor{black}{\sethlcolor{red!10}\hl{ic}}\textcolor{black}{\sethlcolor{green!10}\hl{ culture}}\ldots
\newline
\fadetitle{Edited}\\
$\underset{\textcolor{black}{0}}{\text{\textcolor{black}{\sethlcolor{green!10}\hl{ insp}}}}$$\underset{\textcolor{black}{1}}{\text{\textcolor{black}{\sethlcolor{red!10}\hl{iration}}}}$$\underset{\textcolor{black}{2}}{\text{\textcolor{black}{\sethlcolor{green!10}\hl{ from}}}}$$\underset{\textcolor{red}{3}}{\text{\textcolor{red}{\sethlcolor{red!10}\hl{ years}}}}$$\underset{\textcolor{black}{4}}{\text{\textcolor{black}{\sethlcolor{red!10}\hl{ of}}}}$$\underset{\textcolor{black}{5}}{\text{\textcolor{black}{\sethlcolor{red!10}\hl{ G}}}}$$\underset{\textcolor{black}{6}}{\text{\textcolor{black}{\sethlcolor{green!10}\hl{ael}}}}$$\underset{\textcolor{black}{7}}{\text{\textcolor{black}{\sethlcolor{red!10}\hl{ic}}}}$$\underset{\textcolor{black}{8}}{\text{\textcolor{black}{\sethlcolor{green!10}\hl{ culture}}}}$\ldots
\newline
\end{subfigure}
\begin{subfigure}{\linewidth}
    \includegraphics[width=0.95\linewidth, trim= 0 .2in 0 0, clip]{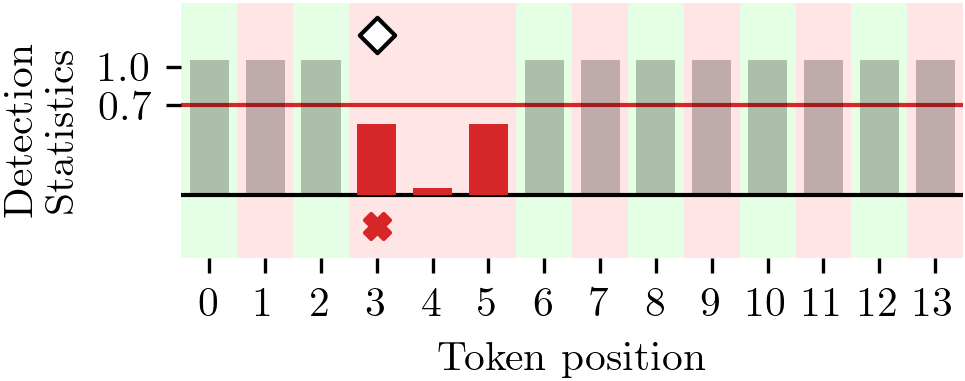}
\end{subfigure}
\captionsetup{justification=raggedright,singlelinecheck=false}
\subcaption{1-token REPLACE on AB (Combinatorial Pattern)}
\vspace{1.5em}
\end{subfigure}
\hfill
% SAMPLE #1: DELETE
\begin{subfigure}{.48\linewidth}
\begin{subfigure}{\textwidth}
\fadetitle{Prompt}\\
 Djedkare built his pyramid\ldots
\newline
\fadetitle{LLM}\\
\textcolor{black}{\sethlcolor{green!10}\hl{ at}}\textcolor{black}{\sethlcolor{red!10}\hl{ Sa}}\textcolor{black}{\sethlcolor{green!10}\hl{qq}}\textcolor{black}{\sethlcolor{red!10}\hl{â}}\textcolor{black}{\sethlcolor{green!10}\hl{ra}}\textcolor{black}{\sethlcolor{red!10}\hl{ in}}\textcolor{black}{\sethlcolor{green!10}\hl{ the}}\textcolor{black}{\sethlcolor{red!10}\hl{ }}\textcolor{black}{\sethlcolor{green!10}\hl{5}}\textcolor{black}{\sethlcolor{red!10}\hl{th}}\textcolor{black}{\sethlcolor{green!10}\hl{ or}}\textcolor{black}{\sethlcolor{red!10}\hl{ }}\textcolor{black}{\sethlcolor{green!10}\hl{6}}\textcolor{black}{\sethlcolor{red!10}\hl{th}}\textcolor{black}{\sethlcolor{green!10}\hl{ century}}\ldots
\newline
\fadetitle{Edited}\\
$\underset{\textcolor{black}{0}}{\text{\textcolor{black}{\sethlcolor{green!10}\hl{ at}}}}$$\underset{\textcolor{black}{1}}{\text{\textcolor{black}{\sethlcolor{red!10}\hl{ Sa}}}}$$\underset{\textcolor{black}{2}}{\text{\textcolor{black}{\sethlcolor{green!10}\hl{qq}}}}$$\underset{\textcolor{black}{3}}{\text{\textcolor{black}{\sethlcolor{red!10}\hl{â}}}}$$\underset{\textcolor{black}{4}}{\text{\textcolor{black}{\sethlcolor{green!10}\hl{ra}}}}$$\underset{\textcolor{black}{5}}{\text{\textcolor{black}{\sethlcolor{red!10}\hl{ in}}}}$$\underset{\textcolor{black}{6}}{\text{\textcolor{black}{\sethlcolor{green!10}\hl{ the}}}}$$\underset{\textcolor{black}{7}}{\text{\textcolor{black}{\sethlcolor{red!10}\hl{ }}}}$$\underset{\textcolor{red}{8}}{\text{\textcolor{red}{\sethlcolor{green!10}\hl{6}}}}$$\underset{\textcolor{black}{9}}{\text{\textcolor{black}{\sethlcolor{red!10}\hl{th}}}}$$\underset{\textcolor{black}{10}}{\text{\textcolor{black}{\sethlcolor{green!10}\hl{ century}}}}$$\underset{\textcolor{black}{11}}{\text{\textcolor{black}{\sethlcolor{red!10}\hl{ B}}}}$$\underset{\textcolor{black}{12}}{\text{\textcolor{black}{\sethlcolor{green!10}\hl{CE}}}}$$\underset{\textcolor{black}{13}}{\text{\textcolor{black}{\sethlcolor{red!10}\hl{,}}}}$$\underset{\textcolor{black}{14}}{\text{\textcolor{black}{\sethlcolor{green!10}\hl{ and}}}}$\ldots
\newline
\end{subfigure}
\begin{subfigure}{\linewidth}
    \includegraphics[width=0.95\linewidth, trim= 0 .2in 0 0, clip]{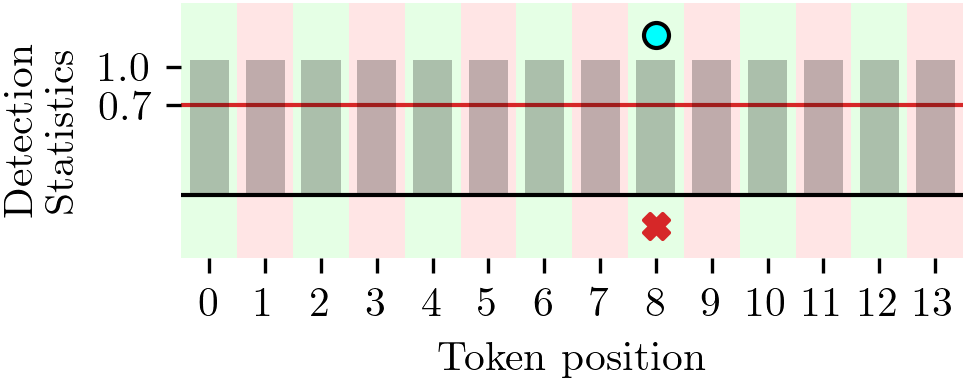}
\end{subfigure}
\captionsetup{justification=raggedright,singlelinecheck=false}
\subcaption{4-token DELETE on AB (Combinatorial Pattern)}
\vspace{1.5em}
\end{subfigure}
% SAMPLE #2: INSERT
\begin{subfigure}{.48\linewidth}
\begin{subfigure}{\textwidth}
\fadetitle{Prompt}\\
 During his first season in the NBA , Jordan\ldots
\newline
\fadetitle{LLM}\\
\textcolor{black}{\sethlcolor{green!10}\hl{ helped}}\textcolor{black}{\sethlcolor{red!10}\hl{ lead}}\textcolor{black}{\sethlcolor{green!10}\hl{ the}}\textcolor{black}{\sethlcolor{red!10}\hl{ Chicago}}\textcolor{black}{\sethlcolor{green!10}\hl{ Bull}}\textcolor{black}{\sethlcolor{red!10}\hl{s}}\textcolor{black}{\sethlcolor{green!10}\hl{ to}}\textcolor{black}{\sethlcolor{red!10}\hl{ their}}\textcolor{black}{\sethlcolor{green!10}\hl{ first}}\ldots
\newline
\fadetitle{Edited}\\
$\underset{\textcolor{black}{0}}{\text{\textcolor{black}{\sethlcolor{green!10}\hl{ helped}}}}$$\underset{\textcolor{black}{1}}{\text{\textcolor{black}{\sethlcolor{red!10}\hl{ lead}}}}$$\underset{\textcolor{black}{2}}{\text{\textcolor{black}{\sethlcolor{green!10}\hl{ the}}}}$$\underset{\textcolor{black}{3}}{\text{\textcolor{black}{\sethlcolor{red!10}\hl{ Chicago}}}}$$\underset{\textcolor{black}{4}}{\text{\textcolor{black}{\sethlcolor{green!10}\hl{ Bull}}}}$$\underset{\textcolor{black}{5}}{\text{\textcolor{black}{\sethlcolor{red!10}\hl{s}}}}$$\underset{\textcolor{black}{6}}{\text{\textcolor{black}{\sethlcolor{green!10}\hl{ to}}}}$$\underset{\textcolor{red}{7}}{\text{\textcolor{red}{\sethlcolor{red!10}\hl{ lose}}}}$$\underset{\textcolor{black}{8}}{\text{\textcolor{black}{\sethlcolor{green!10}\hl{ their}}}}$\ldots
\newline
\end{subfigure}
\begin{subfigure}{\linewidth}
    \includegraphics[width=0.95\linewidth, trim= 0 .2in 0 0, clip]{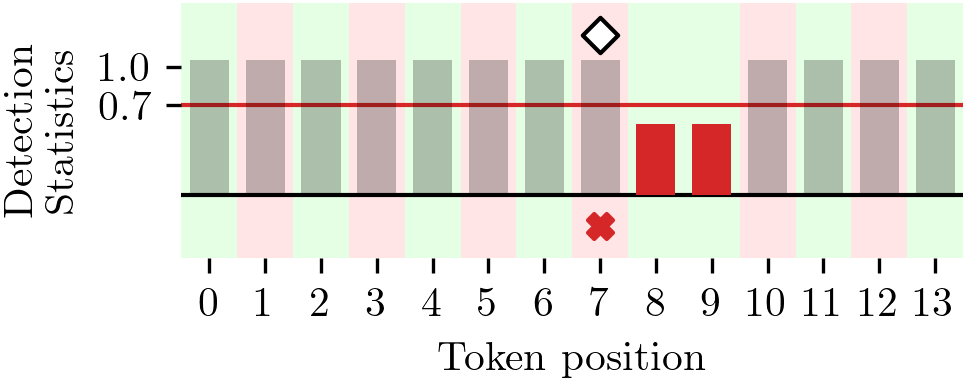}
\end{subfigure}
\captionsetup{justification=raggedright,singlelinecheck=false}
\subcaption{1-token INSERT on AB (Combinatorial Pattern)}
\vspace{1em}
\end{subfigure}
\hfill
% SAMPLE #3: DELETE
\begin{subfigure}{.48\linewidth}
\begin{subfigure}{\textwidth}
\fadetitle{Prompt}\\
 Djedkare built his pyramid\ldots
\newline
\fadetitle{LLM}\\
\textcolor{black}{\sethlcolor{green!10}\hl{ in}}\textcolor{black}{\sethlcolor{blue!10}\hl{ the}}\textcolor{black}{\sethlcolor{green!10}\hl{ north}}\textcolor{black}{\sethlcolor{yellow!10}\hl{ of}}\textcolor{black}{\sethlcolor{red!10}\hl{ Ab}}\textcolor{black}{\sethlcolor{blue!10}\hl{us}}\textcolor{black}{\sethlcolor{red!10}\hl{ir}}\textcolor{black}{\sethlcolor{yellow!10}\hl{,}}\textcolor{black}{\sethlcolor{green!10}\hl{ not}}\textcolor{black}{\sethlcolor{blue!10}\hl{ too}}\textcolor{black}{\sethlcolor{green!10}\hl{ far}}\textcolor{black}{\sethlcolor{yellow!10}\hl{ from}}\textcolor{black}{\sethlcolor{red!10}\hl{ that}}\ldots
\newline
\fadetitle{Edited}\\
$\underset{\textcolor{black}{0}}{\text{\textcolor{black}{\sethlcolor{green!10}\hl{ in}}}}$$\underset{\textcolor{black}{1}}{\text{\textcolor{black}{\sethlcolor{blue!10}\hl{ the}}}}$$\underset{\textcolor{black}{2}}{\text{\textcolor{black}{\sethlcolor{green!10}\hl{ north}}}}$$\underset{\textcolor{black}{3}}{\text{\textcolor{black}{\sethlcolor{yellow!10}\hl{ of}}}}$$\underset{\textcolor{black}{4}}{\text{\textcolor{black}{\sethlcolor{red!10}\hl{ Ab}}}}$$\underset{\textcolor{black}{5}}{\text{\textcolor{black}{\sethlcolor{blue!10}\hl{us}}}}$$\underset{\textcolor{black}{6}}{\text{\textcolor{black}{\sethlcolor{red!10}\hl{ir}}}}$$\underset{\textcolor{black}{7}}{\text{\textcolor{black}{\sethlcolor{yellow!10}\hl{,}}}}$$\underset{\textcolor{red}{8}}{\text{\textcolor{red}{\sethlcolor{green!10}\hl{ that}}}}$$\underset{\textcolor{black}{9}}{\text{\textcolor{black}{\sethlcolor{blue!10}\hl{ of}}}}$$\underset{\textcolor{black}{10}}{\text{\textcolor{black}{\sethlcolor{red!10}\hl{ Kh}}}}$$\underset{\textcolor{black}{11}}{\text{\textcolor{black}{\sethlcolor{yellow!10}\hl{ak}}}}$$\underset{\textcolor{black}{12}}{\text{\textcolor{black}{\sethlcolor{green!10}\hl{he}}}}$\ldots
\newline
\end{subfigure}
\begin{subfigure}{\linewidth}
    \includegraphics[width=0.95\linewidth, trim= 0 .2in 0 0, clip]{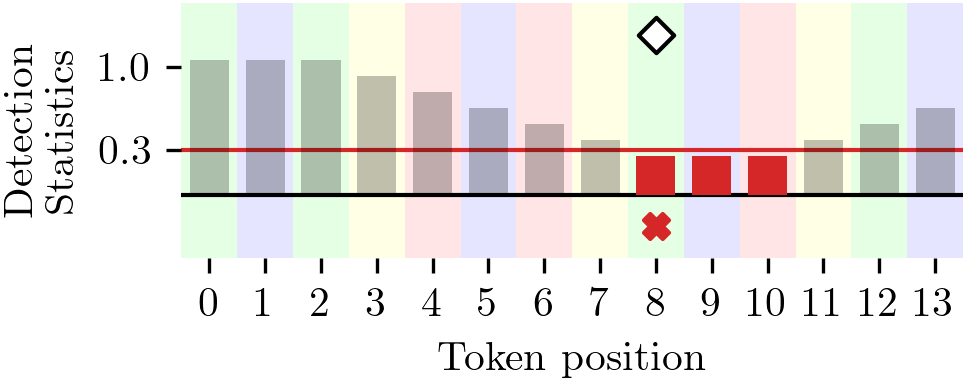}
\end{subfigure}
\captionsetup{justification=raggedright,singlelinecheck=false}
\subcaption{4-token DELETE on ACADBCBD}
\vspace{1em}
\end{subfigure}
% LEGEND
\begin{subfigure}{\linewidth}
    \centering
    \vspace{-0.1in}
    \includegraphics[width=.7\linewidth]{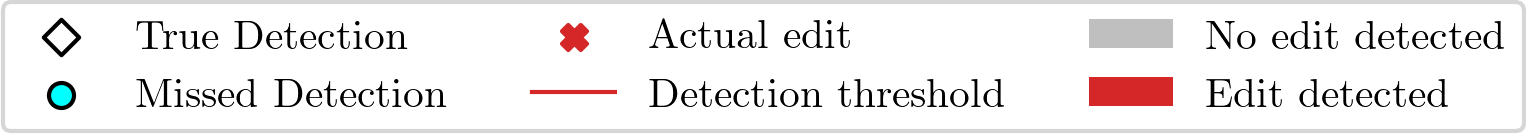}
    \vspace{-0.6em}
\end{subfigure}
\caption{Four examples of edit detection statistics under the two combinatorial patterns. Each example shows the prompt text, the watermarked LLM-generated text, and the edited text. The detection threshold is marked in red, and detected edit spans are represented by red bars that fall below the threshold. We mark the true detection and missed detection in the plot, under a tolerance of $L=3$. The examples are generated using LLaMA-2-7b with watermarking strength $\delta =5.8$.}
\label{fig:edits-sample1}
\vspace{-0.02in}
\end{figure}

\begin{figure}[ht!]
  \centering
  \begin{subfigure}{.455\linewidth}
    \centering    \includegraphics[width=0.94\linewidth]{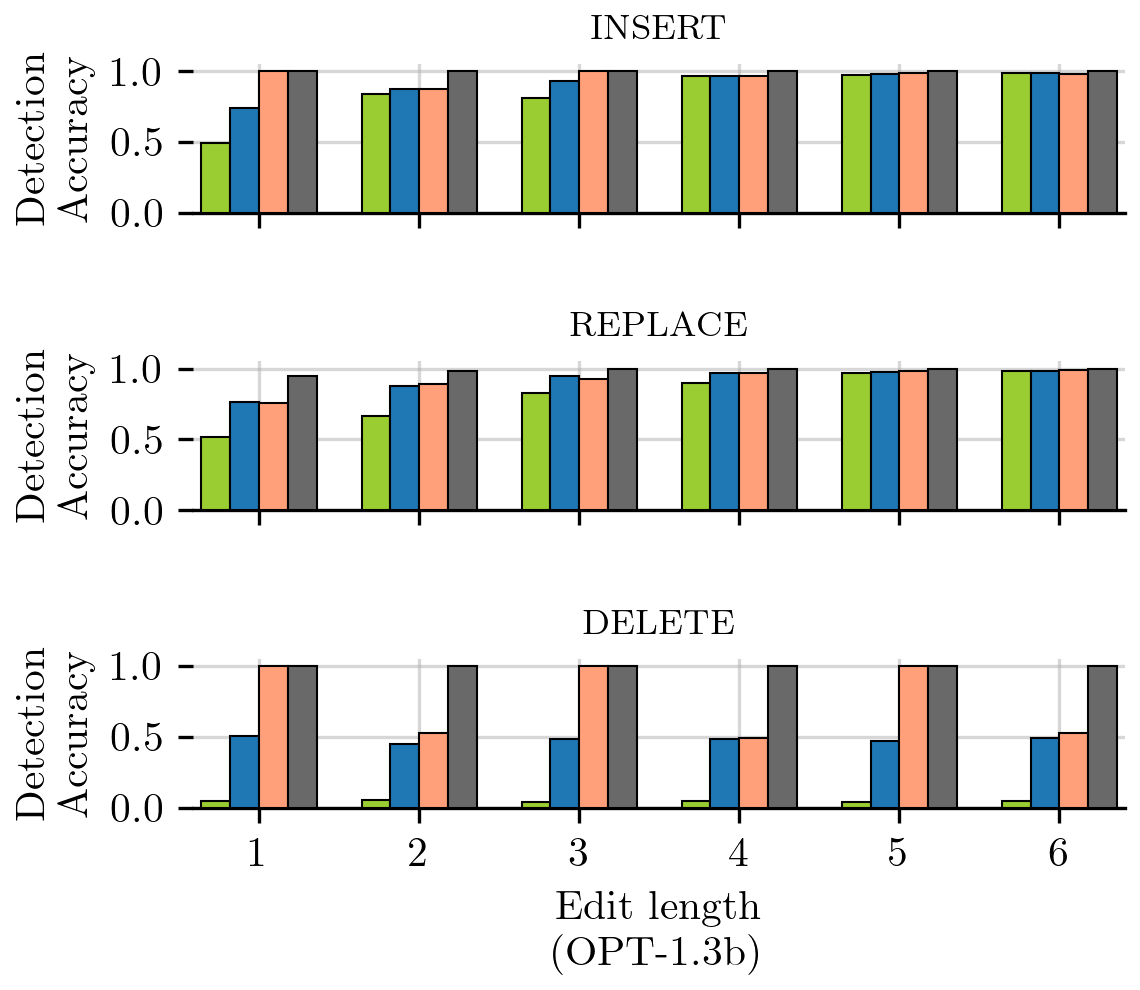}
  \end{subfigure}
  \hfill
  \begin{subfigure}{.455\linewidth}
    \centering
    \includegraphics[width=0.94\linewidth]{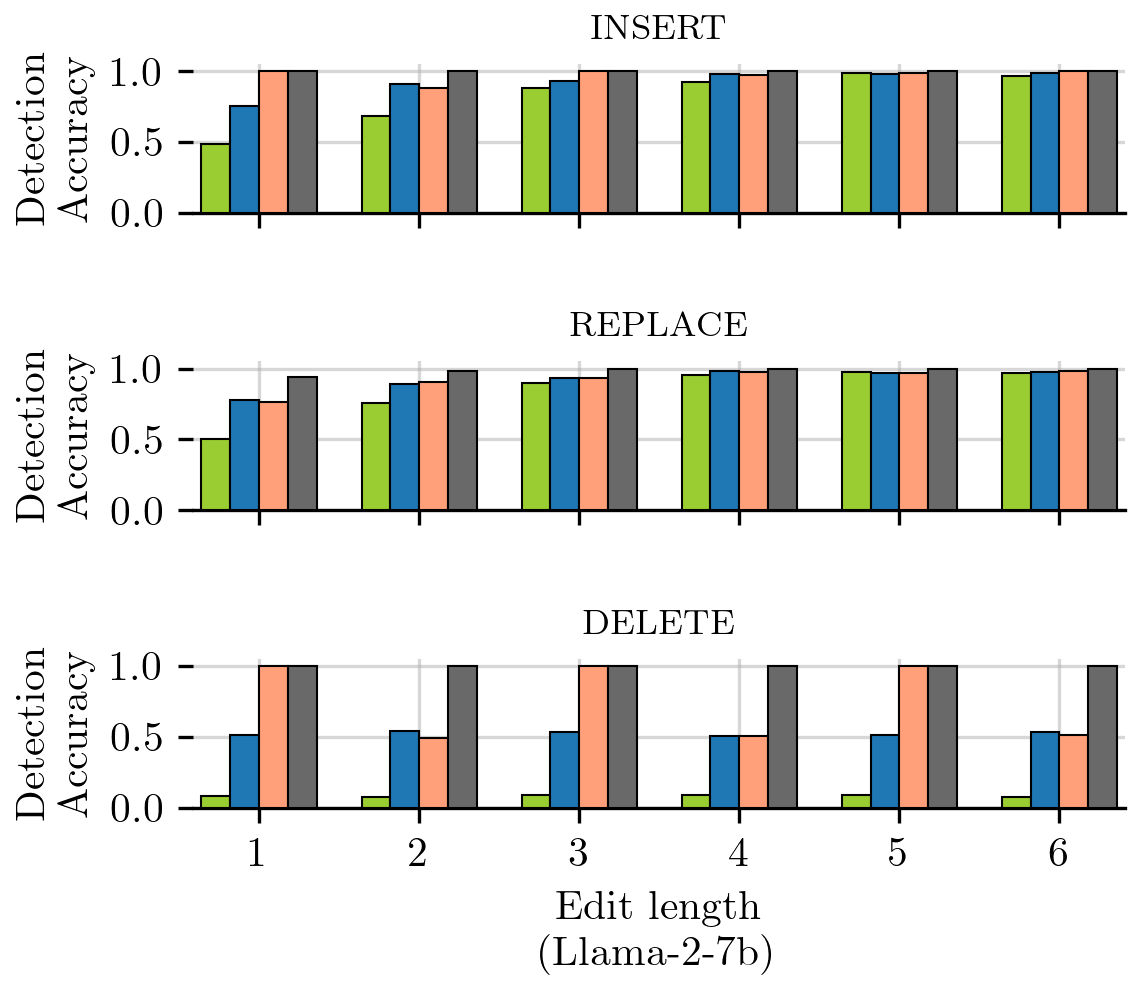}
  \end{subfigure}

  \begin{subfigure}{.6\linewidth}
    \includegraphics[width=\linewidth]{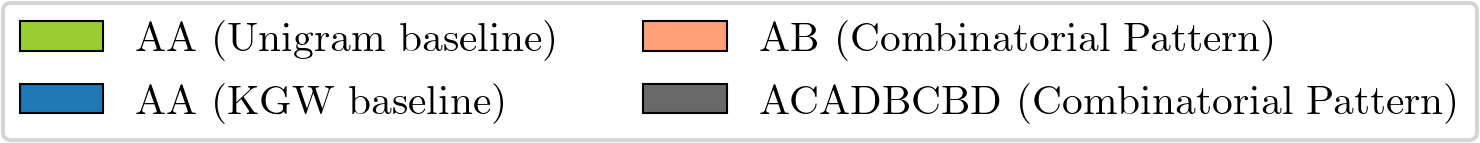}%
  \end{subfigure}%
  \vspace{-0.05in}
  \caption{Edit detection accuracy under different edit lengths (1 to 6 tokens) and three edit types (insertion, replacement, and deletion) on OPT-1.3b (left) and Llama-2-7b (right). The watermarking strength parameter is $\delta=5.8$. In all cases we allow an evaluation tolerance of $L=3$ tokens.}%
  \label{fig:edits-sample2}
\end{figure}

\subsection{Results on Post-Generation Edit Detection}\label{sec:edit-detection}

We first evaluate the performance of our detection method across three canonical types of post-generation edits: replacement, insertion, and deletion. We demonstrate that the pattern-based watermark allows accurate localization of the edited spans. 
For each type of edit, Figure \ref{fig:edits-sample1} shows the edit detection statistics across token positions. It can be seen that the edit detection statistics fall below the threshold in almost all edited tokens, indicating efficient detection of local edits. In contrast, the edit detection statistics lie above the threshold during most non-edit regions, as the threshold is calibrated to achieve a small Type-I error rate. Furthermore, comparing Figure \ref{fig:edits-sample1} (b) and (d), we observe that the longer combinatorial pattern in (d) is more effective at detecting certain edit lengths, particularly in the case of deletions.

Furthermore, we compute the average edit detection accuracy over 1,000 samples of 64 tokens long, using a fixed Type-I error rate of 0.1 and a tolerance parameter $L=3$. Results for both LLaMA-2-7b and OPT-1.3b are shown in Figure~\ref{fig:edits-sample2}, evaluated under various combinatorial patterns with a fixed watermark strength. 
We also compare against baseline watermarking methods, including KGW \citep{kirchenbauer2023watermark} and Unigram \citep{zhaoprovable}, both perturb logits over a selected green list. For a fair comparison, we adapt our local edit detection statistic to these settings by treating them as having a degenerate pattern of the form $AA\cdots$, where $A$ refers to the green list.

As shown in Figure \ref{fig:edits-sample2}, the proposed method can detect various post-generation edits with high accuracy, especially when using the longer combinatorial pattern. The detection accuracy generally increases quickly with the edit span, indicating that consecutive edits are easier to detect. Combinatorial patterns significantly outperform baseline KGW and Unigram watermarking methods, especially for detecting deletion-type edits and short-span edits (a particularly challenging case). This is very promising given the simplicity of the proposed edit detection scheme. For a simple pattern with a period of two (the $AB$ pattern used here), it is hard to detect deletions that align exactly with the pattern (e.g., removals of length two, four, and six in Figure \ref{fig:edits-sample2}). However, the longer combinatorial pattern can achieve high detection accuracy for such token deletion edits. See Appendix \ref{app:more-numerical} for additional results under varying watermarking strengths, patterns, and sampling mechanisms.

\subsection{Results on Watermark Detection}\label{sec:watermark-ppl}

To evaluate watermark detectability, we generate both unwatermarked and watermarked texts of 64 tokens long, on LLaMA-2-7b and OPT-1.3b. We then apply the watermark detection statistics \eqref{eq:water-detection} to distinguish between watermarked and unwatermarked texts. The detection threshold is selected to control the Type-I error rate at 0.1, and we report the corresponding Type-II error rate to assess detection effectiveness. Meanwhile, to assess the impact of watermarking on text quality, we compute the perplexity (PPL) of the generated text.  

\begin{figure}[ht!]
    \centering   
    \vspace{-0.15in}
    \includegraphics[width=.9\linewidth]{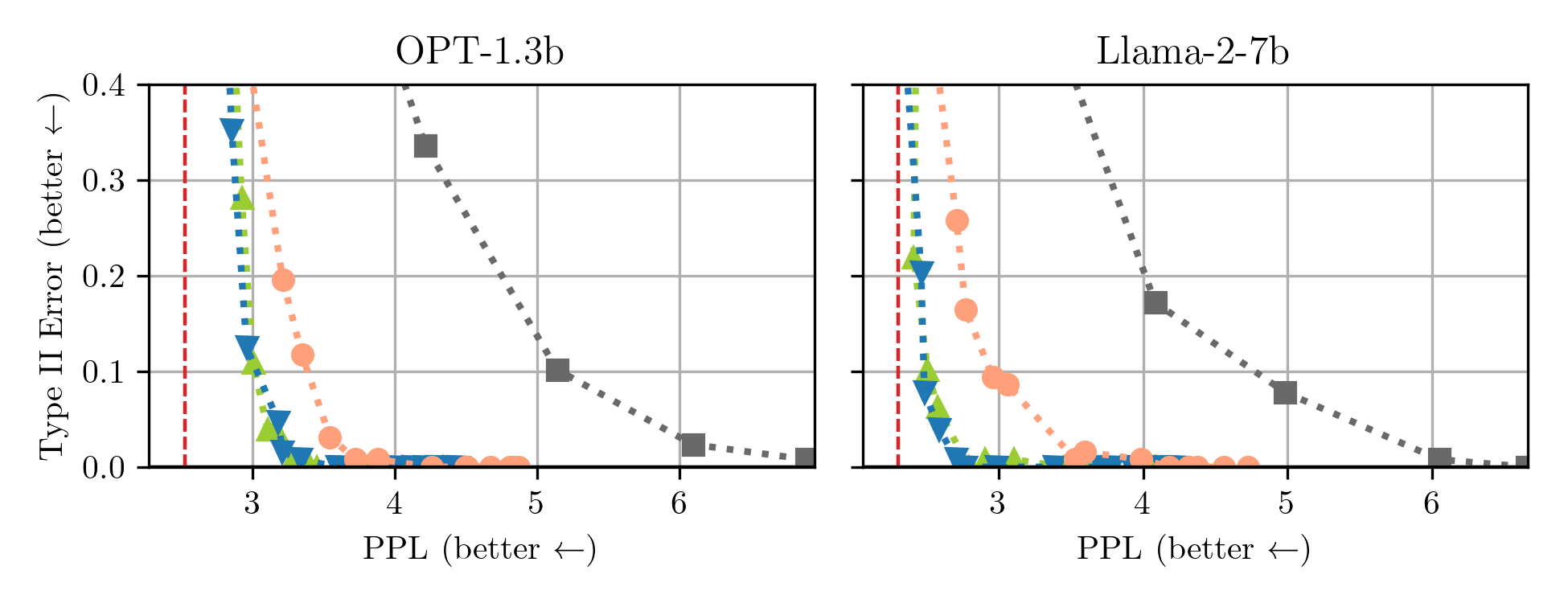}
    \begin{subfigure}[b]{\linewidth}
        \centering
        \includegraphics[width=.6\linewidth]{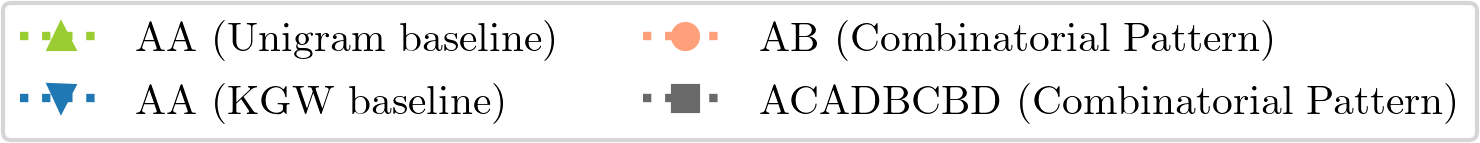}
    \end{subfigure}
    \caption{Tradeoff curve between the Type-II error rate of watermark detection and the perplexity (PPL) of generated text. The red dashed line indicates the perplexity of unwatermarked text.}
    \label{fig:wm_vs_ppl}
\end{figure}

In Figure~\ref{fig:wm_vs_ppl}, we plot both the Type-II error rate and PPL across varying watermarking strength $\delta$ and different combinatorial patterns. We also include the perplexity of the unwatermarked model for comparison. As the watermark strength $\delta$ increases, we observe a general decrease in the Type-II error rate and an increase in PPL, indicating that the combinatorial watermark becomes more detectable but the generated text is of lower quality. This highlights the fundamental trade-off between watermark detectability and generation quality.
Furthermore, the longer combinatorial pattern---with four unique tags ($A,B,C,D$)---exhibits the weakest trade-off between detectability and text quality. A possible explanation is that increasing the number of unique tags reduces the size of each sub-vocabulary. This, in turn, degrades text quality and thus increases PPL.

\section{Concluding Remarks}\label{sec:conclusion}

We formulate and study the problem of local edit detection in watermarked LLM outputs. We introduce a combinatorial pattern-based watermark that embeds rich local structure into the watermarked text. 
Leveraging this structure, we derived lightweight statistics that can flag and localize suspect spans containing edits. We evaluate the edit detection performance via experiments across various editing scenarios. 

There are still several limitations of our work. For example, the pattern design space explored is relatively narrow with at most four unique tags, and the method remains less effective for very short edits (one or two tokens), which are challenging to detect. Moreover, we focus on lightweight detection statistics such as \eqref{eq:water-detection}, which makes minimal assumptions about the underlying pattern. However, the trade-off between watermark detectability, edit detection accuracy, and perplexity could potentially be improved by adopting a more sophisticated detection method. To tackle these challenges, future work includes exploring longer or adaptive pattern designs, further improving the detection accuracy, and extending edit detection to other watermarking frameworks.

\appendix

\section{Proofs for Section \ref{sec:watermark} and More Theoretical Analysis} \label{app:theory}

\begin{proof}[Proof to Theorem \ref{thm:false-alarm}]
 %We first consider the false alarm probability under a clean (unedited) watermark. 
Recall that the edit detection statistic
\[
|\bs|_E(t) = \frac{1}{w}\sum_{i=0}^{w-1} I_w(t-i),
\]
is the normalized count of sliding windows of length $w$ that perfectly match the known tag pattern $\mathcal{P}$. 
Here each indicator $I_w(t-i)$ takes value $1$ if and only if 
\[
(s^{(t-i)}, \dots, s^{(t-i+w-1)}) \in \mathcal{V}_{T_1}\times \cdots \times \mathcal{V}_{T_w},
\]
i.e., the generated tokens in the window fall entirely in the corresponding subset prescribed by $\mathcal{P}$. Here, with a slight abuse of notation but for simplicity, we just use $\{\mathcal{V}_{T_1} \cdots \mathcal{V}_{T_w}\}$ to denote the pattern enforced to tokens within the current sliding window. 

Under a clean watermark, by our assumption and the construction of the watermarking scheme, for each window indicator we have $\mathbb{E}[ I_w(t-i) ] = \mu_1^{(w)}$ and hence $\mathbb{E}[|\bs|_E(t)] = \mu_1^{(w)}$. 
%Furthermore, by our assumption, whether each token matches with its tag set is independent of other tokens.

We consider two cases separately. First, if $\mu_1^{(w)}=1$, i.e., the so-called hard watermarking regime \citep{kirchenbauer2023watermark} where the token is strictly required to be drawn from the corresponding list. For example, this can happen when we set the watermarking strength parameter $\delta$ to be large. Under this case, we have $|\bs|_E(t)\equiv 1$ and thus $\Pr[|\bs|_E(t)< \tau_e \mid \operatorname{no \  edit}]=0$, which implies no false alarm.

For the soft watermarking regime with $\mu_1^{(w)}<1$, in such cases, the token is more likely to be drawn from the corresponding list but is not guaranteed. Note that the list of indicators $\{I_w(t-i)\}_{i=0}^{w-1}$ is not independent, thus the indicators can be represented by a dependency graph: two indicators are adjacent if their corresponding windows overlap. 
For windows with offset $|i-j|=k < w$, the overlap size is $w-k$, we have the joint probability is
\[
\mathbb{E}[I_w(t-i) I_w(t-j)] = \mu_1^{(w+k)}.
\]

Define the $w$-dependent constant $\Delta^{(w)} := \sum_{i,j}\mathbb{E}[I_w(t-i) I_w(t-j)]
= w\mu_1^{(w)}+2\sum_{k=1}^{w-1}(w-k)\mu_1^{(w+k)}$. Janson's inequality \citep{janson1990poisson} states that for $z<\mathbb E[\sum_{i=0}^{w-1} I_w(t-i)]=w\mu_1^{(w)}$,
\[
\mathbb P(\sum_{i=0}^{w-1} I_w(t-i) < z) \le 
\exp\!\left(-\frac{( \mathbb E[\sum_{i=0}^{w-1} I_w(t-i)] - z)^2}{2\Delta^{(w)}}\right).
\]

For the false alarm event $\{|\bs|_E(t) < \tau_e\}$ we have $\sum_{i=0}^{w-1} I_w(t-i) < w\tau_e$, and thus we have 
\[
\mathbb P[|\bs|_E(t)<\tau_e \mid \text{no edit}] \le \exp\left(-\frac{w^2( \mu_1^{(w)} - \tau_e)^2}{2\Delta^{(w)}}\right).
\]
\end{proof}

\noindent{\bf Discussion.} In practice, the pattern alignment probability $\mu_1^{(w)}$ is generally tractable and there is no closed-form expression. However, the probability for token-level adherence is given in Lemma E.1 in \cite{kirchenbauer2023watermark}. For example, under watermarking parameter $\delta$, assuming the two sub-vocabulary sets are of equal size, the probability of drawing a token from the current target subset is lower bounded by $\tilde{\mu}_1 := \frac{\tfrac{1}{2}\alpha}{1+\tfrac{1}{2}(\alpha-1)} \; S\!\left(p, \frac{\tfrac{1}{2}(\alpha-1)}{1+\tfrac{1}{2}(\alpha-1)}\right)$, 
where $\alpha = e^\delta$ and $S(p,z) := \sum_{k} \frac{p_k}{1+z p_k}$, where $p$ represent the next-token probability \citep{kirchenbauer2023watermark}.

It should be noted that the $w$-dependent constant $\Delta^{(w)}$ can scale as $O(w^2)$ in the worst case under strong positive correlations among overlapping windows, leading to a relatively loose upper bound on the false alarm rate. On the other hand, if overlapping windows are independent---so that pattern alignment in one window does not affect another---Hoeffding’s inequality yields a much tighter bound that can decay exponentially with $w$. In practice, however, the exact dependence structure among sliding windows in large language models is intricate and difficult to characterize. 

We also note that the analysis on the edit detection accuracy would be much more complicated due to the complication of all possible edits. As an example, in the following, we provide an analysis by assuming that the edit happens in such a way that reduces the pattern alignment probability for each sliding window to $\mu_0^{(w)}:=\mathbb{P}[I_w(t) =1], \forall t$ and $\mu_0^{(w)}$ is much smaller than the pattern alignment probability $\mu_1^{(w)}$ when there is no edits (clean watermark). We can apply Theorem 5 in \cite{janson2002infamous} to obtain an upper bound for the miss detection probability. Specifically, by Theorem 5 in \cite{janson2002infamous} and under our assumption, we have
\[
\mathbb P[|\bs|_E(t) \ge \tau_e \mid \text{edit}] \le w \exp\left(-\frac{w^2( \tau_e-\mu_0^{(w)})^2}{4w( w\mu_0^{(w)} - w ( \tau_e-\mu_0^{(w)})/3}\right) = w \exp\left(-\frac{3( \tau_e-\mu_0^{(w)})^2}{4(2\mu_0^{(w)} +\tau_e)}\right).
\]

It should be mentioned that the upper bound may not be very informative due to two reasons. First, it is generally much larger than the lower tail probability, as capturing upper-tail probabilities with overlapping time windows is intrinsically difficult \citep{janson2002infamous}. Second, here we are assuming that the edits reduce the pattern alignment probability uniformly for each window to $\mu_0^{(w)}$, which could be unrealistic in practice. For example, in many cases, a local token insertion may force $\mu_0^{(w)}=0$ for windows containing the edit, resulting in 100\% detection accuracy. Therefore, we rely primarily on the empirical results in Section \ref{sec:numerical} to demonstrate the effectiveness of edit detection across scenarios.

\noindent{\bf Analysis on watermark detectability.} We also provide a watermark detectability error analysis for completeness and to support the design of our combinatorial watermarking. The results show that the detection accuracy converges to 1 as $T \to \infty$, ensuring reliable detection with sufficiently long watermarked text. Likewise, the Type-I error rate converges to 0 as $T \to \infty$, implying that sufficiently long unwatermarked text yields a vanishing false alarm rate.

\begin{theorem}[Watermark detection error rates]
Assume that under a clean watermark, the pattern alignment probability for each window of size $w$ is \(\mu_1^{(w)} := \mathbb{P}[I_w(t) =1], \forall t\). When there is no watermarking, assume this probability is reduced to \(\mu_0^{(w)}<\mu_1^{(w)}\). Assume the observed data contains $T$ tokens in total, and the window size is $w$ in the detection statistics.
\begin{itemize}
    \item The probability of detecting the watermark, for a given detection threshold $\tau_D$, is at least
    \[
    \mathbb P(|\bs|_D \ge \tau_d) \ge 1- 
\exp\left(-(T - w + 1)\frac{(\mu_1^{(w)} - \tau_d)^2}{2 w\mu_1^{(w)}}\right).
    \]

    \item The Type-I error rate (probability of false alarm) when there is no watermarking is 
   \[
   \mathbb P[|\bs|_D \ge \tau_d] \le w\cdot \exp\left(-(T-w+1)\frac{3(\tau_d-\mu_0^{(w)})^2}{4w \cdot (2\mu_0^{(w)} + \tau_d)}\right).
   \]
\end{itemize}
\end{theorem}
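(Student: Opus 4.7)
The plan is to treat both parts as concentration inequalities applied to the same sum of dependent Bernoulli indicators $X := \sum_{t=1}^{n} I_w(t)$ with $n = T-w+1$, and to mirror the proof strategy already used for Theorem~\ref{thm:false-alarm}. Under the clean-watermark hypothesis $\mathbb{E}[X] = n\mu_1^{(w)}$, and under the null (no watermarking) $\mathbb{E}[X] = n\mu_0^{(w)}$, so the two claimed bounds are lower and upper deviations of $X$ from its mean, respectively. The dependency structure is inherited from the overlap of windows: $I_w(t_1)$ and $I_w(t_2)$ share tokens precisely when $|t_1-t_2| < w$.

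For the detection-power bound, I would invoke Janson's lower-tail inequality in exactly the same form used in the proof of Theorem~\ref{thm:false-alarm}. Setting $\Delta := \sum_{t_1,t_2}\mathbb{E}[I_w(t_1)I_w(t_2)]$, the inequality yields $\mathbb{P}(X < n\tau_d) \le \exp\bigl(-(n\mu_1^{(w)} - n\tau_d)^2/(2\Delta)\bigr)$. The only remaining work is to bound $\Delta$: the diagonal contributes $n\mu_1^{(w)}$; for ordered pairs with $|t_1-t_2| = k \in \{1,\dots,w-1\}$, one has $\mathbb{E}[I_w(t_1)I_w(t_2)] = \mu_1^{(w+k)} \le \mu_1^{(w)}$, and there are at most $2(w-1)n$ such pairs. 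Combining these with the monotonicity $\mu_1^{(w+k)}\le \mu_1^{(w)}$ gives $\Delta \le w n \mu_1^{(w)}$, so that $\mathbb{P}(X < n\tau_d) \le \exp\bigl(-n(\mu_1^{(w)}-\tau_d)^2/(2w\mu_1^{(w)})\bigr)$. Taking the complement produces the claimed lower bound on $\mathbb{P}(|\bs|_D \ge \tau_d)$.

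For the Type-I error bound, lower-tail machinery does not apply since we now need an \emph{upper} deviation of a sum of dependent Bernoullis, which is known to be qualitatively harder. I would invoke Theorem~5 of Janson (2002), the same tool the paper uses later for edit-detection accuracy. The crucial input is the fractional chromatic number $\chi^\ast$ of the dependency graph on $\{I_w(t)\}_{t=1}^n$. Since two indicators are dependent only when their windows overlap, i.e., $|t_1-t_2|<w$, this graph is a power of a path and admits a proper $w$-coloring (color $t$ by $t \bmod w$); each color class then consists of indicators on mutually non-overlapping token spans, giving $\chi^\ast \le w$. Plugging $\chi^\ast = w$, $\mathbb{E}[X] = n\mu_0^{(w)}$, and deviation $n(\tau_d - \mu_0^{(w)})$ into Janson's Bennett-type inequality, and simplifying the denominator $2\mu_0^{(w)} + 2(\tau_d - \mu_0^{(w)})/3$, yields the stated leading factor $w$ (from the union bound across $w$ color classes) and the denominator of the form $4w(2\mu_0^{(w)}+\tau_d)$.

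The main obstacle I anticipate is justifying the dependency structure itself. Autoregressive generation is not memoryless: indicators on disjoint spans are formally coupled through the shared secret key and through long-range conditioning in the LLM's distribution, so the ``windows don't overlap $\Rightarrow$ indicators independent'' claim is an assumption rather than a theorem. The cleanest path forward is the one implicit in the proof of Theorem~\ref{thm:false-alarm}, namely to treat overlap as the sole source of dependence when computing $\Delta$ and $\chi^\ast$; this is the working assumption under which both pieces of Janson-type machinery apply verbatim. A secondary nuisance is that precise constants in Janson's inequalities vary with the convention used (ordered vs.\ unordered pair counting for $\Delta$, and the form $2\mathbb{E}[X]+2t/3$ vs.\ $\mathbb{E}[X]+t/3$ in the Bennett-type denominator); matching the exact $2w$ and $4w$ factors in the statement is a routine bookkeeping step once the dependency graph is fixed.
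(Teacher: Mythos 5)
Your proposal follows essentially the same approach as the paper: Janson's lower-tail inequality with the same bound $\Delta \le (T-w+1)w\mu_1^{(w)}$ for the detection-power part, and Janson (2002, Thm.~5) with fractional chromatic number $\chi^\ast \le w$ of the overlap-dependency graph for the Type-I error part, including the same simplification of the Bennett-type denominator to $(2/3)(2\mu_0^{(w)}+\tau_d)$. Your caveats about the implicit independence-of-disjoint-windows assumption and the ordered-vs.-unordered pair convention in $\Delta$ are well taken and in fact apply to the paper's own proof as well.
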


\begin{proof}
Recall that the global watermark detection statistic
\[
|\bs|_D =  \frac{1}{T - w + 1} \sum_{t = 1}^{T - w + 1} I_w(t),
\]
is the normalized count of sliding windows of length $w$ that perfectly match the known tag pattern. 

Under the watermarking regime, similar to the proof of Theorem \ref{thm:false-alarm}, by our assumption and the construction of the watermarking scheme, for each window indicator we have $\mathbb{E}[ I_w(t-i) ] = \mu_1^{(w)}$ and hence $\mathbb{E}[|\bs|_D(t)] = \mu_1^{(w)}$. Again we define the $(T,w)$-dependent constant 
\[
\Delta^{(T,w)} := \sum_{\substack{(i,j): \\ |i-j|<w}}\mathbb{E}[I_w(i) I_w(j)]\leq (T-w+1)\mu_1^{(w)}+ (T-w+1)(w-1)\mu_1^{(w)}=(T-w+1)w\mu_1^{(w)}.
\]
Then we can apply Janson’s inequality \citep{janson1990poisson}, which guarantees for $\tau_d <\mu_1^{(w)}$,
\[
\mathbb P(|\bs|_D \le \tau_d) =     \mathbb P(\sum_{t = 1}^{T - w + 1} I_w(t) < (T - w + 1)\tau_d) \le 
\exp\left(-(T - w + 1)^2\frac{(\mu_1^{(w)} - \tau_d)^2}{2\Delta^{(T,w)}}\right).
\]
By substituting the upper bound to $\Delta^{(T,w)}$, we can further simplify the above inequality as
\[
\mathbb P(|\bs|_D \le \tau_d) \le 
\exp\left(-(T - w + 1)\frac{(\mu_1^{(w)} - \tau_d)^2}{2 w\mu_1^{(w)}}\right).
\]

For the false alarm event $\{|\bs|_D \ge \tau_d\}$ when there is no watermarking, we have 
\[
\begin{aligned}
\mathbb P[|\bs|_D \ge \tau_d] & \le w\cdot \exp\left(-(T-w+1)^2\frac{(\tau_d-\mu_0^{(w)})^2}{4w \cdot ((T-w+1)\mu_0^{(w)} + (T-w+1)(\tau_d-\mu_0^{(w)})/3)}\right) \\
& \le w\cdot \exp\left(-(T-w+1)\frac{3(\tau_d-\mu_0^{(w)})^2}{4w \cdot (2\mu_0^{(w)} + \tau_d)}\right).    
\end{aligned}
\]
\end{proof}

\section{Additional Algorithmic Details and Experimental Results}\label{app:more-numerical}

\subsection{Concrete Examples of Detection Statistics}\label{app:example}

To better illustrate the detection algorithms, we give concrete examples of the constructed watermark detection and edit detection statistics for the two exemplary combinatorial patterns in Example \ref{pattern1} and Example \ref{pattern2}. 

\paragraph{Example \ref{pattern1}.} 
    With two unique tags (e.g., $A$ and $B$), we define the pattern $\calP=\{A,B\}$, and thus the watermark is governed by the order $A,B,A,B,\ldots$. Here $A$ and $B$ can be interpreted as the green and red lists (see Figure \ref{fig:simple-example}), respectively, aligning with standard terminology in prior work. In the following, with a slight abuse of notation, we use $A$ and $B$ to also denote their corresponding subset of vocabulary, when not causing confusion.

    Based on the definition in \eqref{eq:count}, we have the following concrete formulations for $I_w(t)$, which is the core component in our watermark detection and edit detection statistics.
\begin{itemize}[leftmargin=1em]
    \item For window size $w=2$, we have 
\[
I_w(t) = \mathds{1} \left\{ \text{$s^{(t)}$ and $s^{(t+1)}$ belongs to different sets ($A,B$ or $B,A$)} \right\}.
\]
% \item For window size $w=3$, we have 
% \[
% I_w(t) = \mathds{1} \left\{ \text{$s^{(t)},s^{(t+1)},s^{(t+2)}$ are $A,B,A$ or $B,A,B$} \right\}.
% \]
\item For window size $w=4$, we have 
\[
I_w(t) = \mathds{1} \left\{ \text{$s^{(t)},s^{(t+1)},s^{(t+2)},s^{(t+3)}$ are in sets $A,B,A,B$ or $B,A,B,A$} \right\}.
\]
\end{itemize}
This also illustrates the case when the window size $w$ exceeds the pattern period. 

\paragraph{Example \ref{pattern2}.}
With four unique tags (e.g., $A, B, C, D$), we define the combinatorial pattern $\calP=\{A, C, A, D, B, C, B, D\}$, and the watermark is governed by the order:
\begin{align}
A,C,A,D,B,C,B,D,A,C,A,D,B,C,B,D,\ldots. \nonumber
\end{align}

Similarly, we have the following concrete formulation of $I_w(t)$ that can be efficiently computed for performing watermark detection and edit detection:
\begin{itemize}[leftmargin=1em]
\item For window size $w=2$, we have 
\[
I_w(t) = \mathds{1} \left\{ \text{$s^{(t)}$ and $s^{(t+1)}$ are in $AC$, $CA$, $AD$, $DB$, $BC$, $CB$, $BD$, or $DA$} \right\}.
\]
\item For window size $w=4$, we have 
\[
\begin{aligned}
I_w(t) & = \mathds{1} \big\{ \text{$s^{(t:t+3)}$ are in $ACAD$, $CADB$, $ADBC$, $DBCB$,} \\
& \hspace{100pt} \text{$BCBD$, $CBDA$, $BDAC$, or  $DACA$ } \big\}. 
\end{aligned}
\]
\end{itemize}
The statistics for larger window sizes are similarly constructed based on the same principle.

%\subsection{More Design Ideas}\label{app:watermark-detect}

\paragraph{Insights for combinatorial pattern design.} 

Motivated by the detection statistics for watermarking, we list some insights for designing the combinatorial pattern to enable simple watermark detection. For ease of watermark detection, we may impose the following structural restriction on our patterns: we let both pattern length $R$ and number of unique tags $r$ to be even numbers, and the tag assignment alternates between even and odd indices---meaning $T_i = T^{(j)}$ only if $i$ and $j$ are both even or both odd. Both examples \ref{pattern1} and \ref{pattern2} satisfy such a property. This design enables simple and effective detection using the smallest window size $w=2$. Specifically, we define the window indicator for positions $t$ as  $I_w(t)=\mathds{1} \left\{ \text{$s^{(t)}\in \calV_{odd}$ and $s^{(t+1)}\in \calV_{even}$, or $s^{(t)}\in \calV_{even}$ and $s^{(t+1)}\in \calV_{odd}$}\right\}$, where $\calV_{odd} = \cup_{1\leq i\le r, \, \text{$i$ is odd}} \calV_{T^{(i)}}$ and $\calV_{even} = \cup_{1\leq i\le r, \, \text{$i$ is even}} \calV_{T^{(i)}}$. 
In other words, detection could rely only on whether the observed token sequence follows the expected even-odd alternation. This simple test does not rely on the full pattern structure. In contrast, the full pattern sequence provides richer information that can be leveraged to localize specific edit positions.

\vspace{0.05in}
\begin{remark}
    We also note that the above watermark detection is performed for black-box LLMs. In practice, when we do have access to the logits information (such as in white-box LLMs), we can instead use the log-likelihood ratio as our edit detection statistics and watermark detection statistics, which will yield more accurate detection results due to the utilization of more information. And this can potentially improve both the edit detection accuracy and the watermark detection accuracy.
\end{remark}

\subsection{More Numerical Results}\label{app:more-numerical}

\paragraph{Results With Varying Watermarking Strength $\delta$, Combinatorial Pattern, and Multinomial Sampling.} 

We present more results on the average edit detection accuracy under a variety of watermarking strengths in Figure \ref{fig:delta_comparison}. We also included a new pattern with $r=3$ unique tags. We note that higher watermarking strength increases accuracy in general. We also note that for the longer ACADBCBD combinatorial pattern, it becomes more effective only beyond a certain watermarking strength threshold. This is likely because, under lower watermarking strengths, the generated watermarked text does not reliably adhere to the pattern, thus the edit detection is less effective. As the watermarking strength increases, the watermarked text has better adherence to the pattern, leading to better edit detection performance.

\begin{figure}[ht!]
    \centering    \includegraphics[width=\linewidth]{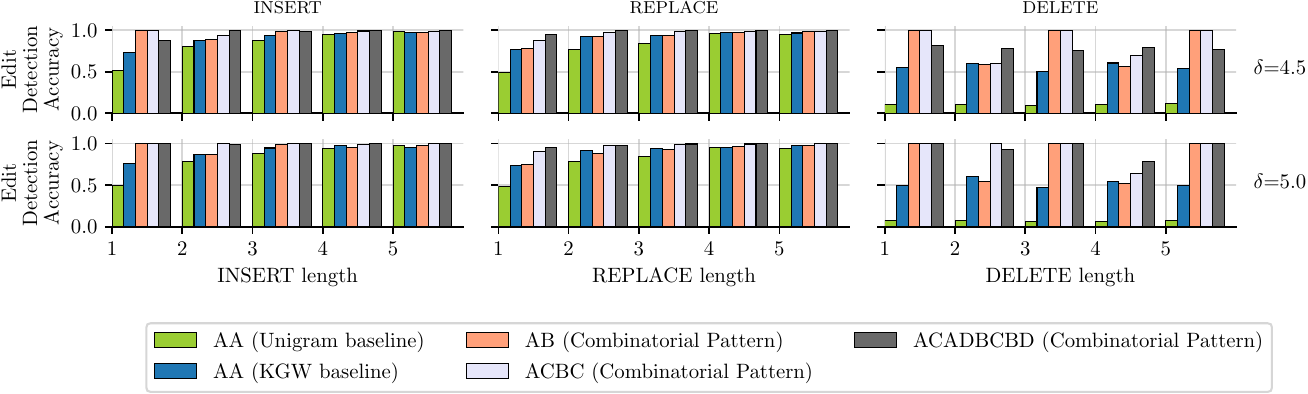}
    \caption{Detection accuracy vs edit type and length under different watermarking strengths $\delta$. This was generated in a similar fashion to Figure \ref{fig:edits-sample2} using Llama-2-7b, with 200 samples of 100-token long generated text for each combination of edit type, length, watermarking strength, and pattern. The texts in this case are generated using multinomial sampling where (\texttt{temperature=1.0, top\_p=0.8}).}
    \label{fig:delta_comparison}
\end{figure}

\paragraph{Influence of Post Edits on Watermark Detection.} 

While our primary goal is to design combinatorial patterns that enable more effective edit localization, it is also important to ensure that the underlying watermark remains reliably detectable. We have demonstrated the watermark detectability in Figure \ref{fig:wm_vs_ppl} using fully watermarked texts. 
In Figure~\ref{fig:edit-impact} below, we present some empirical evidence on the influence of post-generation edits on the magnitude of watermark detection statistics, under varying watermarking strengths. 

As expected, the watermark detection statistics decrease after post-generation edits, with larger decreases observed for longer edit lengths and more complex patterns. Moreover, for the simplest combinatorial pattern $AB$, the degradation in detection statistics is comparable to---or slightly smaller than---that of the two baseline watermarking approaches. This indicates that the combinatorial pattern-based watermarking maintains at least the same level of robustness to post-generation edits, and may even offer improved resilience in certain scenarios. A more comprehensive analysis of robustness, including theoretical aspects such as detection threshold and sensitivity to different edit types, is left for future work. In the following subsection \ref{app:robust}, we present preliminary theoretical insights to illustrate general trends and motivate further study.

\begin{figure}[ht!]
    \centering   \includegraphics[width=\linewidth]{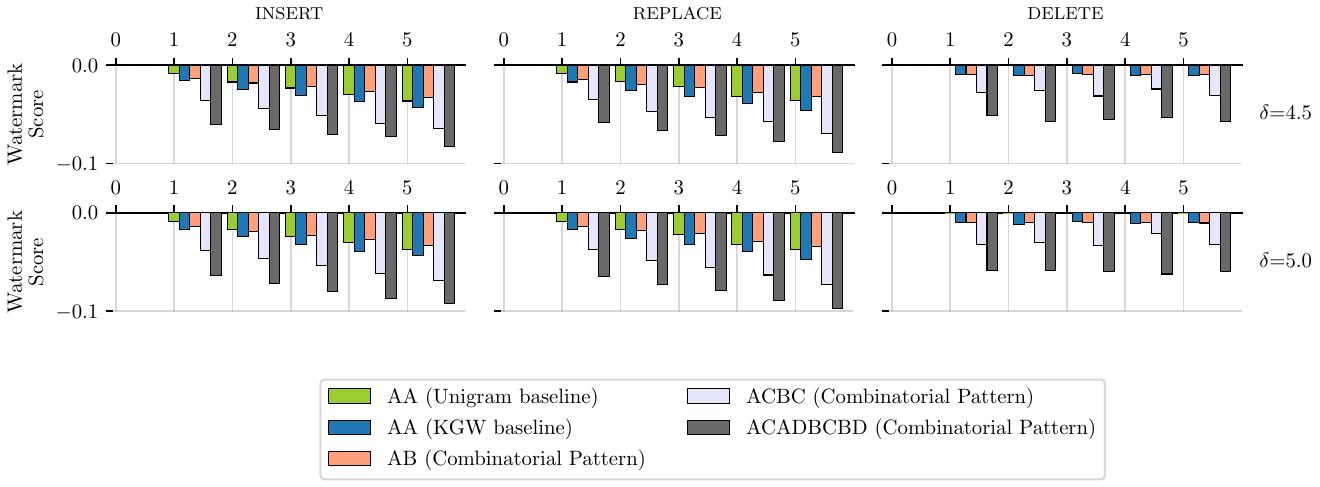}
    \caption{Watermark score impact vs edit type and length under different watermarking strengths $\delta$. The y-axis shows the watermark score difference $|\tilde{\bs}|_D$ - $|{\bs}|_D$, where $\bs$ and $\tilde{\bs}$ denote the original watermarked text and the edited text, respectively. The negative values on the y-axis indicate the watermark detection statistics decrease after edits}
    \label{fig:edit-impact}
\end{figure}

\paragraph{Meaningful Edit Detection / Misinformation Spoofing Attack.} To test the robustness of our watermark against realistic threats, we simulated a \textbf{targeted misinformation attack}. We began by generating 139 texts, each 100 tokens long, using OPT-2.7b, embedded with the AB combinatorial pattern watermark with $\delta$=4.5.
We then tasked the Gemini API to introduce small but harmful edits to the texts one by one using the following prompt:
\begin{tcolorbox}[colback=gray!0, colframe=gray!50, title=Gemini-2.5-Flash Prompt]
You are an expert on history, facts, and journalism.\\
I will give you a text, and your task is to modify part of it so it gives clear misinformation.\\
- Your change MUST significantly alter the meaning of the text.\\
- Only change 6 words or less, and leave the rest of the text intact.\\
- DO NOT ADD extra formatting, emphasis, punctuation, bolds, or italics.\\
- Only respond with the modified text. Nothing else.
\end{tcolorbox}

A human evaluation of a sample of 50 modified texts found that 90\% indeed did contain clear misinformation. Finally, we applied our edit detection algorithm to these adversarially modified texts, and evaluated the edit detection accuracy. Table \ref{tab:meaningful_edit_evaluation} shows the detection accuracy results, grouped by the number of tokens edited.

\begin{table}[h!]
    \centering
    \begin{tabular}{r r r}
        \toprule
        \textbf{Tokens Edited} & \textbf{Edit Detection Accuracy (\%)} & \textbf{Texts}\\
        \midrule
        1   & 81.4\% & 59 \\
        2   & 90.9\% & 44 \\
        3-9 & 91.7\% & 36 \\
        \bottomrule
    \end{tabular}
    \caption{AB (Combinatorial Pattern): Edit detection accuracy under a misinformation attack. The table buckets texts by the number of "Tokens Edited", reports how many "Texts" fall under each bucket, and the corresponding "Edit Detection Accuracy".}
    \label{tab:meaningful_edit_evaluation}
\end{table}

This trend is consistent with our random edit simulations: our algorithm is able to detect and localize edits, and its accuracy increases when more consecutive tokens are modified.

\subsection{A Sensitivity Analysis on the Watermark Detectability and Robustness}\label{app:robust}

We provide some  analysis on the impact of post-generation edits on the watermark detection statistic. In general, the added edits will decrease the watermark detection statistics, as demonstrated in Figure \ref{fig:edit-impact}, thus decreasing the watermark detectability.  

Recall the watermark detector
\[
|\bs|_D \;=\;\frac{1}{N}\sum_{t=1}^{N} I_w(t), 
\quad N = T-w+1,
\]
where \(w\) is the sliding-window length, $I_{w}(t)\in\{0,1\}$ indicates whether the window
$s^{(t:t+w-1)}$ matches the watermark pattern, \(T\) is the text length, and let 
\(M=\sum_{t} I_w(t)=N\cdot|\bs|_D \) be the total number of windows that match the pattern.

% \vspace{0.5em}
% \noindent\textbf{Edit Sensitivity.}   

Note that a token at absolute position $u\in\{1,\dots,T\}$ belongs to the $w$
windows whose \emph{starting} indices lie in
\[
  \bigl\{\,u-w+1,\,u-w+2,\,\dots,\,u\,\bigr\}\cap[N].
\]
Hence \emph{any single-token perturbation can flip at most $w$ indicators
$I_{w}(t)$}.

% \vspace{0.5em}
% \noindent\textbf{Notation for edits} 

In the following, we use $S_{\mathrm{ins}}$, $S_{\mathrm{del}}$, $S_{\mathrm{rep}}$ to denote the token numbers in insertions, deletions, replacements. And let $\tilde I_w(t)$ denote the indicator after editing. Meanwhile, we use $\Delta_{\bullet}$ to denote the worst-case loss of the number of matched windows attributable to the corresponding edit type~$\bullet$. We first consider three cases separately. 

\begin{itemize}[leftmargin=*]

\item {\bf Replacements}. Replacements alter content but keep length fixed, so $N$ remains unchanged. Moreover, each new token can break at most $w$ windows. Therefore for
$S_{\mathrm{rep}}$ replacements, the worst-case loss of matched windows is $\Delta_{\mathrm{rep}} = \min\{w S_{\mathrm{rep}},M\}$. For the resulted edited text $\tilde\bs$, the watermark detection statistics after edits thus become
\[
|\tilde\bs|_{D}\ge\frac{M-\Delta_{\mathrm{rep}}}{N}.
\]

    \item {\bf Insertions}. Adding $S_{\mathrm{ins}}$ tokens grows length to $T+S_{\mathrm{ins}}$, so the window count increases from $N$ to $N+S_{\mathrm{ins}}$. For
$S_{\mathrm{ins}}$ insertions, we have the worst-case loss of matched windows is $ \Delta_{\mathrm{ins}}
  \;=\; \min\{\,w S_{\mathrm{ins}},\,M\}$. For the resulted edited text $\tilde\bs$, this yields
\[
  |\tilde\bs|_{D}
    \;\ge \;
    \frac{M-\Delta_{\mathrm{ins}}}{N+S_{\mathrm{ins}}}.
\]
Since $S_{\mathrm{ins}}$ appears in the denominator, the worst-case statistics after insertions degrade \emph{faster} than
replacements (which leave $N$ unchanged).

\item {\bf Deletions}. 
Removing $S_{\mathrm{del}}$ tokens shortens the text, so the window count decreases from $N$ to $N-S_{\mathrm{del}}$. Again at most $w$ of the indicators $I_{w}(t)$ can flip, giving $\Delta_{\mathrm{del}}=\min\{w S_{\mathrm{del}},M\}$. 
The watermark detection statistics after edits thus satisfy
\[
  |\tilde\bs|_{D}
  \;\ge\;
  \frac{M-\Delta_{\mathrm{del}}}{N-S_{\mathrm{del}}}.
\]

%Because the denominator \emph{shrinks}, deletions are the most damaging edit type per token.

\end{itemize}

% \vspace{0.5em}
% \subsubsection{Insertions}\label{sec:insert}

% \textbf{Step 1:  Denominator.}  
% Adding one token grows length to $T+1$, so the window count increases:
% \[
%   N\rightarrow N^{+} = N+1.
% \]

% \textbf{Step 2:  Numerator.}  
% The new token can break at most the $w$ windows.  Therefore for
% $S_{\mathrm{ins}}$ insertions
% \[
%   \Delta_{\mathrm{ins}}
%   \;=\; \min\{\,w S_{\mathrm{ins}},\,M\}.
% \]

% \textbf{Step 3:  Updated statistic.}
% \[
%   |\bs^{+}|_{D}
%     \;=\;
%     \frac{M-\Delta_{\mathrm{ins}}}{N+S_{\mathrm{ins}}}.
% \]
% Because both $M$ and $N$ appear, insertions degrade \emph{faster} than
% replacements (which leave $N$ unchanged) but \emph{slower} than deletions
% (which shrink $N$; see below).

% \vspace{0.5em}
% \subsubsection{Deletions}\label{sec:delete}

% \textbf{Step 1:  Denominator.}  
% Removing one token shortens the text, so
% \(
%   N\rightarrow N^{-}=N-1
% \).

% \textbf{Step 2:  Numerator.}  
% Again at most $w$ of the $I_{w}(t)$ can flip, giving
% \(
%   \Delta_{\mathrm{del}}=\min\{w S_{\mathrm{del}},M\}.
% \)

% \textbf{Step 3:  Updated statistic.}
% \[
%   |\bs^{-}|_{D}
%   \;=\;
%   \frac{M-\Delta_{\mathrm{del}}}{N-S_{\mathrm{del}}}.
% \]
% Because the denominator \emph{shrinks}, deletions are the most damaging
% edit type per token.

% \vspace{0.5em}
% \subsubsection{Replacements}\label{sec:replace}

% Replacements alter content but keep length fixed:

% \begin{align*}
%   N &\rightarrow N, \\
%   \Delta_{\mathrm{rep}} &= \min\{w S_{\mathrm{rep}},M\},\\
%   |\bs^{\circ}|_{D}
%   &=\frac{M-\Delta_{\mathrm{rep}}}{N}.
% \end{align*}

To summarize, each single-token edit can disrupt at most $w$ windows as shown in the following table.

\begin{center}
\begin{tabular}{@{}lcc@{}}
\toprule
\textbf{Edit type} & \textbf{Lost matches} & \textbf{Window count change} \\ \midrule
Insertion          & \(\le w\)            & \(+1\) \\
Deletion           & \(\le w\)            & \(-1\) \\
Replacement        & \(\le w\)            & \(0\)  \\ \bottomrule
\end{tabular}
\end{center}

% \vspace{0.5em}
% \subsubsection{Combining All Three Edits}\label{sec:combine}

Collecting the individual effects and clipping at zero yields the
deterministic worst-case bound
\begin{equation}
  |\bs_{\text{edited}}|_{D}
  \;\ge\;
  \frac{
    M
    - w\bigl(S_{\mathrm{ins}} + S_{\mathrm{del}} + S_{\mathrm{rep}}\bigr)
  }{
    N + S_{\mathrm{ins}} - S_{\mathrm{del}}
  },\,
  \label{Edit-Bound-Detailed}
\end{equation}
here $\bs_{\text{edited}}$ denotes the resulting text after all edits. 
The numerator loses up to \(w\) matches per corrupted token; the
denominator is stretched by insertions and contracted by deletions,
remaining unchanged for replacements.

\noindent\textbf{Interpreting the bound}. From the worst-case lower bound in \eqref{Edit-Bound-Detailed}, it can be seen that if one wishes to tolerate at most \(\bigl(S_{\mathrm{ins}},S_{\mathrm{del}},S_{\mathrm{rep}}\bigr)\) benign edits, we can plug those values into \eqref{Edit-Bound-Detailed} and set the decision threshold \(\tau_{d}\) no greater than the resulting lower bound. This guarantees that if the watermark is detectable before the edit, then it can also be detected after \(\bigl(S_{\mathrm{ins}},S_{\mathrm{del}},S_{\mathrm{rep}}\bigr)\) edits. 
Moreover, since a single insertion, deletion, or replacement can disrupt at most $w$ matching windows, the worst-case degradation ($|\bs|_{D}-|\bs_{\text{edited}}|_{D}$) grows linearly with $w$, and thus a smaller window size yields smaller worst-case degradation. However, a smaller window size might be less effective in detecting edits, so there exists some tradeoff in window size selection, and we generally use a larger window for longer patterns in this work.

\section{Complexity Analysis}\label{app:complexity}

Let $T$ be the length of the text in tokens and $w$ be the length of the sliding window for detection. The complexity of our detection metrics, defined in \eqref{eq:water-detection}, is as follows. First, we analyze the complexity of the naive implementation.
\begin{itemize}
    \item \textbf{Single Window Score Complexity} $f(I_w(t))$: To calculate the score for a single window at position $t$, we compare the token window of length $w$ against all $w$ possible cyclic shifts of the watermark pattern. Each comparison involves $w$ token-wise operations, taking $O(w)$ time. Therefore, the total time complexity for one window is $O(w^2)$.

    \item \textbf{Detection Score Complexity} $f(|\mathcal{S}|_D)$: This score requires computing $I_w(t)$ for every possible window in the text. There are $T - w + 1$ such windows. The total complexity is thus $O(T) \cdot O(w^2) = O(T w^2)$.

    \item \textbf{Edit Score Complexity} $f(|\mathcal{S}|_E(t))$: The edit score at a position $t$ requires $w$ evaluations of the $I_w$ function. The complexity is therefore $w \cdot O(w^2) = O(w^3)$. To compute this for all $T$ positions in the text, the total complexity becomes $O(T w^3)$.
\end{itemize}

The naive approach can be optimized using techniques such as rolling hashes. A rolling hash allows the hash of a new window (e.g., from token $t$ to token $t+1$) to be calculated in $O(1)$ time from the previous window's hash. This would reduce the \textit{amortized} complexity of computing a window's hash to $O(1)$. Then using a hash-table, we can look up if there are any matching shifts in $O(1)$. Consequently, the complexities would become:
\[
f(I_w(t)) = O(1), \quad f(|\mathcal{S}|_D) = O(T), \quad \text{and} \quad f(\{|\mathcal{S}|_E(t)\}_{t=1}^T) = O(T w).
\]

In this work, we use the naive implementation, as the window size $w$ is small and fixed in all our experiments.

\bibliographystyle{plain}
\bibliography{myreferences}

\end{document}